\documentclass{article}
\usepackage[preprint]{colm2026_conference}

\usepackage{microtype}
\usepackage{hyperref}
\usepackage{url}
\usepackage{booktabs}
\usepackage{amsmath}
\usepackage{amsthm}
\usepackage{graphicx}
\usepackage{multirow}
\usepackage{wrapfig}
\usepackage{makecell}
\usepackage{xcolor}
\usepackage{enumitem}
\usepackage{mathtools}
\usepackage{subcaption}

\newcommand{\Sref}[1]{\S\ref{#1}}
\newcommand{\Tref}[1]{Table~\ref{#1}}
\newcommand{\Fref}[1]{Figure~\ref{#1}}
\newcommand{\Aref}[1]{Appendix~\ref{#1}}

\newcommand{\modeltag}[1]{
  \raisebox{0.15ex}{\textcolor{#1}{\rule{2.5ex}{1.0ex}}}
}

\newtheorem{theorem}{Theorem}

\newtheorem{innercustomthm}{Theorem}
\newenvironment{customthm}[1]
  {\renewcommand\theinnercustomthm{#1}\innercustomthm}
  {\endinnercustomthm}

\definecolor{ucColor}{RGB}{255,179,0}
\definecolor{ucimColor}{RGB}{250,120,40}
\definecolor{uclmColor}{RGB}{207,28,43}
\definecolor{uclmwfColor}{RGB}{235,75,155}
\definecolor{ucimlmColor}{RGB}{255,143,233}
\definecolor{ucimlmwfColor}{RGB}{152,18,176}
\definecolor{bcimlmColor}{RGB}{51,53,191}
\definecolor{bcimlmwfColor}{RGB}{70,168,204}

\usepackage{lineno}

\definecolor{darkblue}{rgb}{0, 0, 0.5}
\hypersetup{colorlinks=true, citecolor=darkblue, linkcolor=darkblue, urlcolor=darkblue}

\title{Differences in Text Generated by Diffusion and Autoregressive Language Models}

\author{Zeyang Zhang$^{1, 2}$\thanks{Equal contribution.} \quad
  Chengwei Liang$^{2}$\footnotemark[1] \quad
  Xingyan Chen$^{2}$\footnotemark[1] \quad
  Meiqi Gu$^{2}$\footnotemark[1] \\
  \textbf{Minrui Luo}$^{2}$ \quad
  \textbf{Jingzhao Zhang}$^{2, 1, 3}$\thanks{Corresponding authors.} \quad
  \textbf{Tianxing He}$^{2, 3, 1}$\footnotemark[2] \\
  $^1$Shanghai Qi Zhi Institute \\
  $^2$Institute for Interdisciplinary Information Sciences, Tsinghua University \\
  $^3$Xiongan AI Institute \\
  \texttt{\{zhangzey24,lcw24,chenxy24,gumq24,luomr22\}@mails.tsinghua.edu.cn} \\
  \texttt{\{jingzhaoz,hetianxing\}@mail.tsinghua.edu.cn}
}

\begin{document}

\ifcolmsubmission
\linenumbers
\fi

\maketitle

\begin{abstract}

Diffusion language models (DLMs) are promising alternatives to autoregressive language models (ARMs),
yet the intrinsic differences in their generated text remain underexplored.
We first find empirically that off-the-shelf DLMs exhibit lower $n$-gram entropy, higher semantic coherence, and higher semantic diversity.
To understand the cause, we conduct controlled experiments that decouple the effects of training objectives and decoding algorithms.
Results suggest that the DLM training objective contributes to the increases in semantic coherence and semantic diversity, but has a minor influence on entropy.
These differences are primarily driven by the bidirectional context; other components in the training objective, such as input masking, label masking, and the weighting function, have a much weaker influence.
Further, our experiments demonstrate that the reduction in entropy stems from DLMs' decoding algorithms, particularly confidence-based remasking strategies.
We provide a theoretical understanding for this entropy reduction phenomenon.
Together, our work uncovers key mechanisms underlying the differences between DLMs and ARMs in text generation, and informs future design of training objectives and decoding algorithms in DLMs.\footnote{Our code is available at \url{https://github.com/imzzy1201/DifferenceDiffusionAutoregressive}.}

\end{abstract}
\section{Introduction}
Diffusion language models (DLMs) have recently emerged as promising alternatives to autoregressive language models (ARMs) for natural language generation~\citep{li2025survey,tseng2025diffusion}. They offer distinct advantages, such as higher data efficiency and faster text generation~\citep{prabhudesai2025diffusion, khanna2025mercury,liu2025sdlm}.  With comparable model sizes and training budgets,
DLMs achieve performance competitive with ARMs across various tasks,
such as language understanding, mathematical reasoning, and code generation~\citep{song2025seed,khanna2025mercury,zhu2025llada}. 

However, beyond comparisons on efficiency and benchmark performance, the differences in the intrinsic properties of text generated by DLMs and ARMs remain underexplored.
In this work, we evaluate the generated texts using three metrics: entropy, semantic coherence, and semantic diversity (defined in \Sref{sec:metrics}), which capture text properties at the token, sentence, and document levels, respectively. As shown in \Fref{fig:off-the-shelf}, our evaluation of 20 off-the-shelf DLMs and ARMs reveals that DLMs tend to produce texts with lower $n$-gram entropy, higher semantic coherence, and higher semantic diversity ($n$-gram entropy serves as a proxy for full entropy; see \Sref{sec:motivation} for details). These findings provide insights into how the generation behaviors of DLMs and ARMs differ.

\begin{wrapfigure}[21]{r}{7cm}
\begin{center}
\vspace{-\baselineskip}
\includegraphics[width=0.9\linewidth]{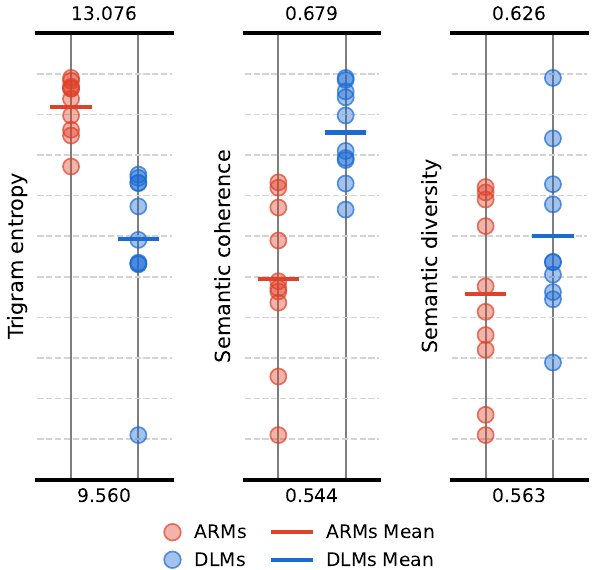}
\end{center}
\caption{Comparison of three metrics between 20 off-the-shelf DLMs and ARMs. DLMs tend to exhibit lower trigram entropy alongside higher semantic coherence and semantic diversity compared to ARMs.}
\label{fig:off-the-shelf}
\end{wrapfigure}

We conduct controlled experiments to isolate and analyze the mechanisms underlying these differences. Unlike ARMs, DLMs model the language distribution through mask prediction from noised inputs $p_\theta(x|x_{\text{noised}})$~\citep{austin2021structured,shi2024simplified,sahoo2024simple}. Moreover, DLMs employ a variety of flexible decoding strategies~\citep{nie2025large,cheng2025sdar,wu2025fast,bie2025llada2,ye2025dream}. Consequently, both the training objective and the decoding algorithm could contribute to the observed metric differences. 

To decouple their effects, we perform two sets of experiments: (1) on the training objectives, comparing DLMs and ARMs under a shared decoding setup; and (2) on the decoding algorithms, comparing various decoding strategies within DLMs. To summarize, our contributions are as follows:

\begin{itemize}[leftmargin=*]
\item We evaluate the generation behaviors of 20 off-the-shelf DLMs and ARMs on $n$-gram entropy, semantic coherence, and semantic diversity, providing insights into the differences between their generated corpora (\Sref{sec:motivation}).

\item Through controlled experiments on training objectives, we find that the DLM training objective contributes to the increase in semantic coherence and semantic diversity, but minimally affects entropy. By interpolating the objectives between DLMs and ARMs, we identify bidirectional context as the dominant factor responsible for these increments, whereas other components, such as input masking, label masking, or weighting functions, have a much smaller effect (\Sref{sec:train}).

\item Through detailed experiments and analyses on decoding algorithms, we demonstrate that the choice of remasking strategy is critical.
Specifically, DLMs' entropy drops under confidence-based remasking strategies, resulting in lower entropy than ARMs; under other strategies, DLMs' entropy remains approximately unchanged and stays at ARMs' level.
We also provide theoretical understanding for this effect (\Sref{sec:inference}).
\end{itemize}

\section{Related work}

\paragraph{Diffusion language models.}

Inspired by the success of diffusion models in image generation~\citep{rombach2022high,ho2020denoising}, early work establishes a viable path from continuous to discrete domains, introducing diffusion over discrete tokens for language modeling~\citep{austin2021structured,lou2023discrete}.
Building on subsequent theoretical developments~\citep{shi2024simplified,ou2024your,sahoo2024simple}, recent efforts scale these methods to large language models, such as LLaDA~\citep{nie2025large} and Dream~\citep{ye2025dream}, demonstrating strong scalability and promising reasoning capabilities. 

To further improve efficiency, various approaches are proposed, including autoregressive initialization and block-wise diffusion~\citep{wu2025fast1,ye2025dream,gong2024scaling,cheng2025sdar,wu2025fast}. Concurrently, the flexible decoding order of DLMs motivates specialized decoding strategies, 
including random selection, confidence-based or entropy-based remasking~\citep{chang2022maskgit,nie2025large,kim2025train,ye2025dream,ben2025accelerated}, dynamic scheduling~\citep{wu2025fast1,cheng2025sdar,wu2025fast,bie2025llada2}, and other related techniques. 
\paragraph{Comparison between DLMs and ARMs.}

Prior works comparing DLMs and ARMs can be organized into three aspects: training objective and dynamics, decoding mechanism, and downstream task performance.

In terms of training objective and dynamics, prior works examine how different training formulations shape model behavior, 
such as in representation structure, generalization, and robustness. Specifically, DLMs may exhibit early-layer redundancy and reduced recency bias~\citep{goel2026skip}, achieve higher data efficiency~\citep{prabhudesai2025diffusion,gao2025makes,ni2025diffusion}, and mitigate the reversal curse~\citep{kitouni2024factorization,shin2026understanding}. 

Regarding decoding mechanism, DLMs enable flexible, any-order generation and theoretically support more efficient parallel sampling~\citep{li2025breaking,jiang2025diffusion}. However, empirical results show that these advantages are not always realized in practice~\citep{yang2025powerful,ni2026flexibility}, 
as performance could be influenced by decoding strategies.
Other works discover that DLMs often determine their answers at early decoding stages~\citep{li2025diffusion}, and can be halted early~\citep{vaina2023diffusion}.  Diffusion-specific safety issues are also explored~\citep{rahimi2026step,zhang2025jailbreaking,wen2025devil}.

For downstream task performance, prior works generally find that DLMs are competitive with similarly sized ARMs~\citep{nie2024scaling,li2025survey}, and that DLMs show advantages in tasks requiring bidirectional reasoning or global context understanding~\citep{zhang2025diffusion,wang2026analyzing,xiong2025unveiling}. Further, some studies also demonstrate trade-offs between generation quality and generation speed~\citep{feng2025theoretical,dikov2025diffusion}.

Overall, existing studies primarily characterize differences in efficiency and downstream task performance, leaving the intrinsic properties of the generated text itself underexplored.
\section{Preliminaries and results for off-the-shelf models}
\label{sec:preliminaries}

\subsection{Diffusion language models}
\label{sec:diffusion}

Let $\mathcal{V}$ be the vocabulary. A text sequence of length $L$ is denoted by $x \in \mathcal{V}^L$, where $x^i$ is the $i$-th token. We denote by $\mathcal{D}_{\text{data}}$ the data distribution over $\mathcal{V}^L$.

DLMs learn to reconstruct clean text from noised inputs by predicting the ground-truth tokens for randomly masked positions~\citep{austin2021structured,nie2025large}:
\begin{equation}
\mathcal L(\theta) =-\mathbb E_{x_0,t,x_t} \left[ \omega_t \sum_{i=1}^{L} \mathbf 1[x_t^i = \texttt{MASK}] \log p_\theta(x_0^i \mid x_t) \right].
\end{equation}
Here, a clean sequence $x_0 \sim \mathcal{D}_{\text{data}}$ is noised into $x_t$ by converting each token to $\texttt{MASK}$ token with probability $t$, where $t \sim \text{Uniform}(0,1)$ represents the noise level. A weighting function $\omega_t$, commonly set to $1/t$, balances the loss across noise levels. The objective trains the model to recover the original token $x_0^i$ at each masked position in $x_t$.

Unlike the left-to-right sequential decoding of ARMs, DLMs start generation with a sequence of $\texttt{MASK}$ tokens, iteratively refining it into actual text. 
A common practice is block-wise decoding~\citep{arriola2025block}, where the sequence is divided into equal-length blocks that are refined consecutively. We focus on this setting.

Within each block, the model performs multiple denoising steps to gradually decode all masked positions before moving to the next block. At each step, the model predicts a token for every masked position in the block, then remasks a subset of them, i.e., reverting them to $\texttt{MASK}$, before proceeding to the next step. The choice of which positions to remask is determined by a \emph{remasking strategy}. Assume a block length of $B$ and a target number of $N$ denoising steps.
In this work, we investigate four commonly used remasking strategies:
\begin{itemize}[leftmargin=*]
    \item \textbf{Low-confidence remasking}~\citep{chang2022maskgit,nie2025large}: This strategy remasks tokens with the lowest \emph{confidence} values, i.e., their predicted probabilities, leaving exactly $B/N$ most confident tokens to be decoded at each step.
    \item \textbf{Dynamic low-confidence remasking}~\citep{cheng2025sdar,wu2025fast,bie2025llada2}: 
    At each step, this strategy first attempts to remask tokens with confidence lower than $\tau$. If the number of decoded tokens does not reach the target $B/N$, it falls back to low-confidence remasking. Consequently, the total number of decoding steps for a block may be fewer than $N$, since more than $B/N$ tokens may be decoded in a single step.
    \item \textbf{High-entropy remasking}~\citep{ye2025dream}: This strategy evaluates the entropy of the predicted token distribution at each position, and remasks positions with the highest entropy values so that exactly $B/N$ tokens are decoded at each step.
    \item \textbf{Random remasking}~\citep{nie2025large}: This strategy uniformly randomly selects tokens to remask so that exactly $B/N$ tokens are decoded at each step.
\end{itemize}
 
We refer to the first two approaches collectively as \emph{confidence-based remasking strategies}, as they select remasking positions based on the confidence values of the predicted tokens.

\subsection{Metrics for comparing generated text}
\label{sec:metrics}

We aim to understand the different generation behaviors of DLMs and ARMs across multiple granularities, progressing from the smallest token level, to the intermediate sentence level, and finally to the overall document level. Accordingly, we adopt three metrics, entropy, semantic coherence, and semantic diversity. The underlying intuition motivating this choice is that each metric captures how the generated corpus ``disperses'' at its corresponding granularity. Specifically, higher entropy, lower semantic coherence, and higher semantic diversity reflect more dispersed generation behaviors at the token, sentence, and document levels, respectively.

Given $p_{\mathcal G}$ as the generation distribution over the tuple $(c, x)$, where $c$ is the prompt and $x$ is the response, the three metrics are defined as follows:
\begin{itemize}[leftmargin=*]
\item \textbf{Entropy} of the generation distribution represents the average uncertainty per token:
\begin{equation}
\label{eq:entropy}
\mathcal H(p_{\mathcal G}) \coloneqq -\mathbb E_{(c, x) \sim p_{\mathcal G}} \left[\frac{1}{L}\log p_{\mathcal G}(x \mid c)\right].
\end{equation}

When $p_{\mathcal G}(x \mid c)$ is intractable to evaluate, following \citet{shannon1951prediction}, we use \emph{$n$-gram entropy}, i.e., the entropy of the empirical $n$-gram distribution of the corpus, as a proxy for the full entropy.

\item \textbf{Semantic coherence} measures the fluency of semantic flow. Following \citet{parola2023speech} and \citet{bedi2015automated}, we evaluate this metric by calculating the average cosine similarity between embeddings of adjacent sentences. Given a normalized text embedding model $\text{emb}(\cdot)$\footnote{All sentence embeddings satisfy $\|\mathrm{emb}(\cdot)\|_2 = 1$.}, and denoting each sampled response as $K$ consecutive sentences $x=s^1s^2\cdots s^K$, we define semantic coherence as:
\begin{equation}
\text{Coh}(p_{\mathcal G}) \coloneqq \mathbb E_{(c, x) \sim p_{\mathcal G}} \left[ \frac{1}{K-1} \sum_{i=1}^{K-1} \langle \text{emb}(s^i), \text{emb}(s^{i+1}) \rangle  \;\middle|\; K \geq 2 \right]. ],\quad K \geq 2
\end{equation}
\item \textbf{Semantic diversity} reflects the overall creativity of the generated corpus.
In line with \citet{kirk2023understanding}, we quantify it as one minus the average pairwise cosine similarity of the generated sequences in the semantic embedding space.
Using the same normalized embedding model, it is computed as:
\begin{equation}
\text{Div}(p_{\mathcal G}) \coloneqq 1 - \mathbb E_{(c, x), (c', x') \sim p_{\mathcal G}} [\langle \text{emb}(x), \text{emb}(x') \rangle].
\end{equation}
This is equivalent to the trace of the covariance matrix of the embeddings, i.e., $\text{tr}\left(\text{Cov}_{(c, x) \sim p_{\mathcal G}}(\text{emb}(x))\right)$.\footnote{Let $z=\text{emb}(x)$ with $\|z\|_2=1$. Then $\mathbb E[\langle z,z'\rangle]=\|\mathbb E[z]\|_2^2$ for independent $z,z'$, and $\mathrm{tr}(\mathrm{Cov}(z))=\mathbb E[\|z\|_2^2]-\|\mathbb E[z]\|_2^2=1-\|\mathbb E[z]\|_2^2$, yielding the equivalence.}
\end{itemize}

\subsection{Evaluation on off-the-shelf models}
\label{sec:motivation}

To establish a practical foundation for our work and motivate our subsequent analysis, we begin by evaluating the generation behaviors of off-the-shelf DLMs and ARMs across the three metrics in \Sref{sec:metrics}. We compare 10 DLMs and 10 ARMs from diverse model families, with comparable model sizes ranging from 3B to 8B (listed in \Aref{app:offtheshelf}). To estimate these metrics, we sample texts from these models on 1000 different prompts from the Fineweb dataset~\citep{fineweb}, using their default sampling configurations. 

Regarding entropy, the non-sequential generation nature of DLMs prevents computing the log probability of a sequence;
thus, we adopt $n$-gram entropy as a proxy \citep{shannon1951prediction}. To evaluate semantic coherence and semantic diversity, we extract embeddings using \texttt{bge-m3}~\citep{multi2024m3}. As shown in \Fref{fig:off-the-shelf}, DLMs on average exhibit lower trigram entropy, higher semantic coherence, and higher semantic diversity than ARMs. Unigram and bigram entropy exhibit the same trends as trigram entropy. See \Aref{app:offtheshelf} for more details.

While the results on off-the-shelf models reveal clear trends, there is no strict control over their training setups, which may compromise the robustness of the observed phenomena and hinder rigorous analysis.
Therefore, in \Sref{sec:train} and \Sref{sec:inference}, we train models from scratch and conduct controlled experiments to isolate and analyze the mechanisms underlying these differences. Since DLMs and ARMs differ in both their training objectives and decoding algorithms, we decouple these effects through two sets of experiments: in \Sref{sec:train}, we compare the training objectives under a shared decoding scheme; in \Sref{sec:inference}, we evaluate various decoding strategies on the same model.
\section{Controlled experiments on training objectives}
\label{sec:train}

As initially observed in off-the-shelf models (\Fref{fig:off-the-shelf}), DLMs and ARMs exhibit distinct generation behaviors. We first investigate whether these differences are intrinsically driven by their training objectives.

\subsection{Decomposition of objective components}
\label{sec:decomposition}

To analyze the intrinsic effects of training objectives, we begin by decomposing each objective into its constituent components.
Recall from \Sref{sec:diffusion} that the objective of DLMs is 
\begin{equation}
\mathcal L_{\text{DLM}}(\theta) \coloneqq-\mathbb E_{x_0,t,x_t} \left[ \omega_t \sum_{i=1}^{L} \mathbf 1[x_t^i = \texttt{MASK}] \log p_\theta(x_0^i \mid x_t) \right],\quad \text{where } \omega_t=1/t.
\end{equation}
We can express ARM's training objective in a similar format:
\begin{equation}
\mathcal L_{\text{ARM}}(\theta) \coloneqq-\mathbb E_{x_0} \left[ \sum_{i=1}^{L} \log p_\theta(x_0^i \mid x_0^{<i}) \right] . 
\end{equation}
The differences between these two objectives can be decomposed into four components:
\begin{itemize}[leftmargin=*]
    \item \textbf{Input masking}. DLMs take the noised sequence $x_{\text{input}}=x_t$ as input, while ARMs take the raw sequence $x_{\text{input}}=x_0$. 
    \item \textbf{Label masking}. DLMs accumulate the loss only over masked tokens ($\mathbf{1}[x_t^i = \texttt{MASK}]$), while ARMs accumulate it over all tokens. 
    \item \textbf{Weighting function}. The diffusion process of DLMs yields a time-dependent weight ($\omega_t=1/t$), whereas ARMs use a constant weight. 
    \item \textbf{Context scope}. DLMs condition on bidirectional context $p_\theta(x_0^i \mid x_{\text{input}})$, whereas ARMs condition on unidirectional context $p_\theta(x_0^i \mid x_{\text{input}}^{<i})$.
\end{itemize}

To investigate how each individual component affects the resulting model, we design six intermediate training objectives, as detailed in \Fref{fig:trainingresults}. See \Aref{app:trainobj} for more details on the choice of training objectives. These objectives serve as an interpolation between the ARM objective and the DLM objective. Starting from the ARM objective, we gradually change each component until we obtain the DLM objective. Consequently, by observing at which step the differences in metrics emerge, we can isolate the influence of specific components.

For instance, the \texttt{uc+im} objective in \Fref{fig:trainingresults} takes a randomly masked prefix $x_t^{<i}$ as input and is trained to predict the ground truth $x_0^i$ at all positions, using the loss $-\mathbb E_{x_0,t,x_t} [\sum_{i=1}^{L} \log p_\theta(x_0^i \mid x_t^{<i})]$. Building on this, the \texttt{uc+im+lm} objective trains on $x_0^i$ only for $x_t^i = \texttt{MASK}$, utilizing the loss $-\mathbb E_{x_0,t,x_t} [\sum_{i=1}^{L} \mathbf 1[x_t^i = \texttt{MASK}] \log p_\theta(x_0^i \mid x_t^{<i})]$.

\begin{figure}[t]
    \begin{center}
    \includegraphics[width=\linewidth]{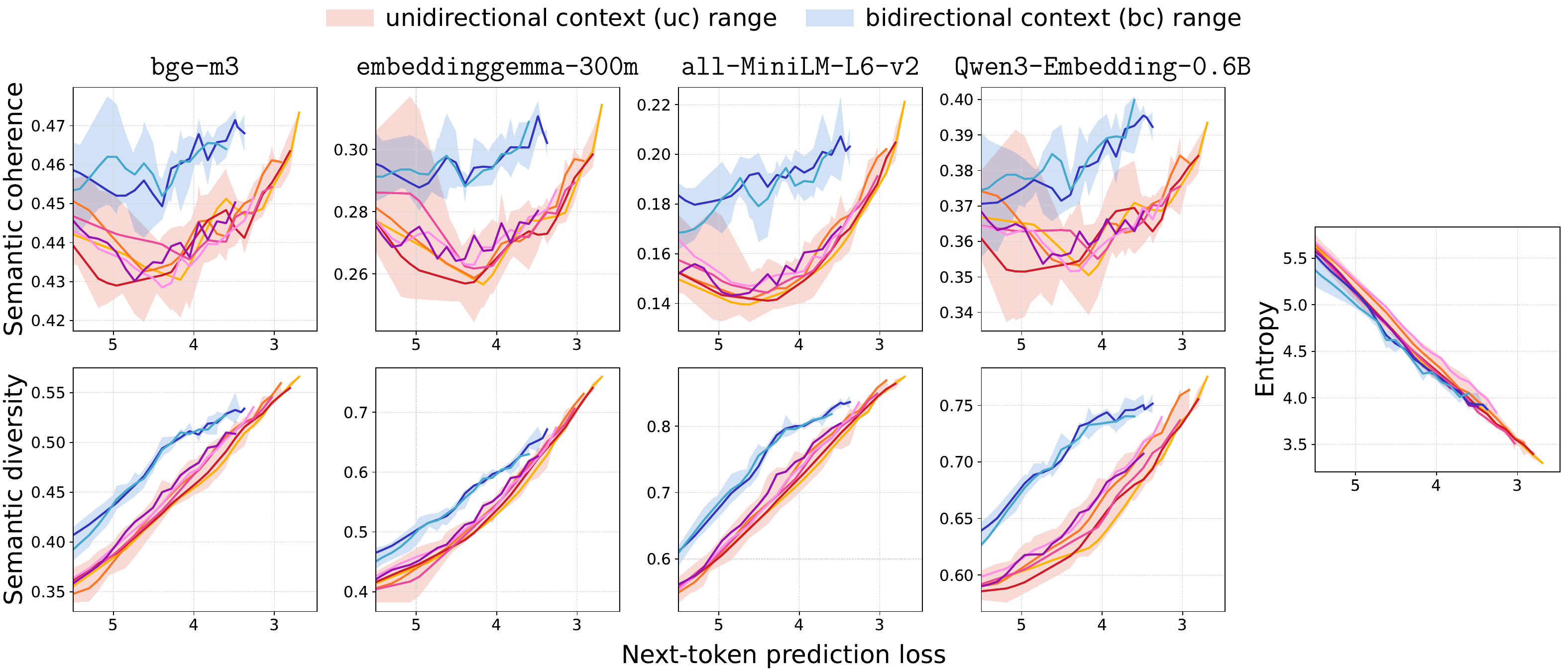}
    \end{center}
    \begin{center}
    \renewcommand{\arraystretch}{1.34}
    \begin{tabular}{cll}
    \toprule
    \multicolumn{1}{c}{\bf Legend color}  &\multicolumn{1}{c}{\bf Components}  &\multicolumn{1}{c}{\bf Training objective} \\
    \midrule
    \modeltag{ucColor}{} & \texttt{uc} & $-\mathbb E_{x_0\phantom{,t,x_t}} [\ \phantom{\frac{1}{t}} \sum_{i=1}^{L} \phantom{1[x_t^i = \texttt{MASK}]} \log p_\theta(x_0^i \mid x_0^{<i}) \ ] = \mathcal{L}_{\text{ARM}}$ \\ 
    \modeltag{ucimColor}{} & \texttt{uc+im} &  $-\mathbb E_{x_0,t,x_t} [\ \phantom{\frac{1}{t}} \sum_{i=1}^{L} \phantom{1[x_t^i = \texttt{MASK}]} \log p_\theta(x_0^i \mid x_t^{<i}) \ ]$ \\ 
    \modeltag{uclmColor}{} & \texttt{uc+lm} & $-\mathbb E_{x_0,t,x_t} [\ \phantom{\frac{1}{t}} \sum_{i=1}^{L} \mathbf 1[x_t^i = \texttt{MASK}] \log p_\theta(x_0^i \mid x_0^{<i}) \ ]$ \\ 
    \modeltag{uclmwfColor}{} & \texttt{uc+lm+wf} & $-\mathbb E_{x_0,t,x_t} [\ \frac{1}{t} \sum_{i=1}^{L} \mathbf 1[x_t^i = \texttt{MASK}] \log p_\theta(x_0^i \mid x_0^{<i}) \ ]$ \\ 
    \modeltag{ucimlmColor}{} & \texttt{uc+im+lm} & $-\mathbb E_{x_0,t,x_t} [\ \phantom{\frac{1}{t}} \sum_{i=1}^{L} \mathbf 1[x_t^i = \texttt{MASK}] \log p_\theta(x_0^i \mid x_t^{<i}) \ ]$ \\ 
    \modeltag{ucimlmwfColor}{} & \texttt{uc+im+lm+wf} & $-\mathbb E_{x_0,t,x_t} [\ \frac{1}{t} \sum_{i=1}^{L} \mathbf 1[x_t^i = \texttt{MASK}] \log p_\theta(x_0^i \mid x_t^{<i}) \ ]$ \\ 
    \modeltag{bcimlmColor}{} & \texttt{bc+im+lm} & $-\mathbb E_{x_0,t,x_t} [\ \phantom{\frac{1}{t}} \sum_{i=1}^{L} \mathbf 1[x_t^i = \texttt{MASK}] \log p_\theta(x_0^i \mid x_t) \;\ \,\, ]$ \\ 
    \modeltag{bcimlmwfColor}{} & \texttt{bc+im+lm+wf} & $-\mathbb E_{x_0,t,x_t} [\ \frac{1}{t} \sum_{i=1}^{L} \mathbf 1[x_t^i = \texttt{MASK}] \log p_\theta(x_0^i \mid x_t) \;\ \,\, ] = \mathcal{L}_{\text{DLM}}$ \\ 
    \bottomrule
    \end{tabular}
    \end{center}
    \caption{Definition and evaluation of eight interpolated training objectives. Abbreviations denote the incorporated components: unidirectional context (\texttt{uc}), bidirectional context (\texttt{bc}), input masking (\texttt{im}), label masking (\texttt{lm}), and weighting function (\texttt{wf}). Curves are averaged over random seeds. A clear clustering emerges, demonstrating that the bidirectional context in the DLM training objective markedly increases semantic coherence and semantic diversity, while entropy stays approximately the same across all eight models.}
    \label{fig:trainingresults}
\end{figure}

\subsection{Experimental setup}
\label{sec:train_experimental_setup}

We train eight models from scratch, each optimized with one of the training objectives defined in \Fref{fig:trainingresults}. All models use an identical 120M-parameter LLaMA architecture~\citep{touvron2023llama}. Each model is trained on the Fineweb~\citep{fineweb} dataset for 2B tokens using a context length of 512. Each experiment is repeated with three different random seeds.

For evaluation, to isolate the effect of decoding algorithms, we adopt a unified decoding procedure across all models. Specifically, we evaluate every model using standard sequential autoregressive decoding. For DLMs, this setup is equivalent to setting the decoding block length to one. At each checkpoint, we evaluate the models by generating 512 sequences, each 512 tokens long, conditioned on distinct 32-token prompts sampled from the evaluation split of the dataset. 

Due to the sequential decoding, we can directly estimate entropy by autoregressive factorization:
\begin{equation}
\label{eq:arfactorize}
\mathcal{H}(p_\mathcal{G})=\mathbb{E}_{(c,x)\sim p_\mathcal{G}}\left[\frac{1}{L}\sum_{i=1}^L \log p_\mathcal{G}(x^i|x^{<i},c)\right].
\end{equation}
To ensure robustness, we evaluate both semantic coherence and semantic diversity using four different embedding models: \texttt{bge-m3}~\citep{multi2024m3}, \texttt{embeddinggemma-300m}~\citep{vera2025embeddinggemma}, \texttt{all-MiniLM-L6-v2}~\citep{reimers2021train}, and \texttt{Qwen3-Embedding-0.6B}~\citep{qwen3embedding}. See \Aref{app:trainsetup} for more details on experimental setups.

\subsection{Results}

\label{sec:ablation_results}

To ensure a fair comparison, we evaluate the models at checkpoints that achieve the same \emph{next-token prediction loss} on the validation set (see \Aref{app:trainsetup} for details). Since all models are restricted to sequential generation, this loss is exactly the cross-entropy between each model's generation distribution and the data distribution, and thus reflects generation quality. Consequently, this alignment ensures that models trained with different objectives have equivalent capabilities.

Under this controlled setup, we observe a clear clustering effect across the metrics (\Fref{fig:trainingresults}). The eight models are divided into two distinct groups. The cluster containing the DLM objective consistently achieves \textbf{higher semantic coherence and higher semantic diversity} than the cluster containing the ARM objective. \textbf{This matches the trends in \Fref{fig:off-the-shelf}.} Meanwhile, we observe \textbf{no significant difference in entropy} across the evaluated models.

We attribute this two-cluster separation to the difference in \emph{context scope}: those utilizing bidirectional context form the cluster with higher semantic coherence and higher semantic diversity, while those using unidirectional context form the other. This separation suggests that bidirectional context is the dominant factor driving these differences, whereas input masking, label masking, and the weighting function exert a much weaker influence.

Specifically, we emphasize two pairs of experiments in \Fref{fig:trainingresults}, where each pair differs only in context scope: (\texttt{uc+im+lm}, \texttt{bc+im+lm}) and (\texttt{uc+im+lm+wf}, \texttt{bc+im+lm+wf}). For each pair, the two experiments show a large gap in semantic coherence and semantic diversity, indicating the dominant effect of bidirectional context.

Last but not least, we verify that all phenomena in this section are robust across different datasets (Fineweb and TinyStories) and architectures (LLaMA and Qwen2). See \Aref{app:trainrobust} for details.

\section{Analyses on decoding algorithms}
\label{sec:inference}

Apart from left-to-right decoding, DLMs can generate tokens in the order guided by an arbitrary remasking strategy, as mentioned in \Sref{sec:diffusion}.
At the same time, the experiments in \Sref{sec:train} show that the training objective alone does not lead to an entropy difference between DLMs and ARMs, while \Fref{fig:off-the-shelf} points to potentially lower entropy in DLMs.
We therefore investigate the influence of remasking strategies on the generation behaviors of DLMs.

\subsection{Experimental setup}

We use sequential decoding as a baseline to evaluate the influence of these remasking strategies. Our evaluation builds on the configuration from \Sref{sec:train_experimental_setup}, using the final checkpoints of models trained in \Sref{sec:train} with the $\mathcal{L}_{\text{DLM}}$ objective, modifying only the decoding algorithm. For all strategies, we set the target number of denoising steps per block $N$ equal to the block length $B$, which targets decoding one token per step.
For the dynamic low-confidence remasking strategy, following \citet{cheng2025sdar}, we set the confidence threshold to $\tau=0.9$.
We evaluate three block lengths: 2, 8, and 32. Following \Sref{sec:train}, we conduct ablations across seeds, datasets and architectures. They are detailed in \Aref{app:inference_robust}.

\begin{figure}[t]
    \begin{center}
    \includegraphics[width=\linewidth]{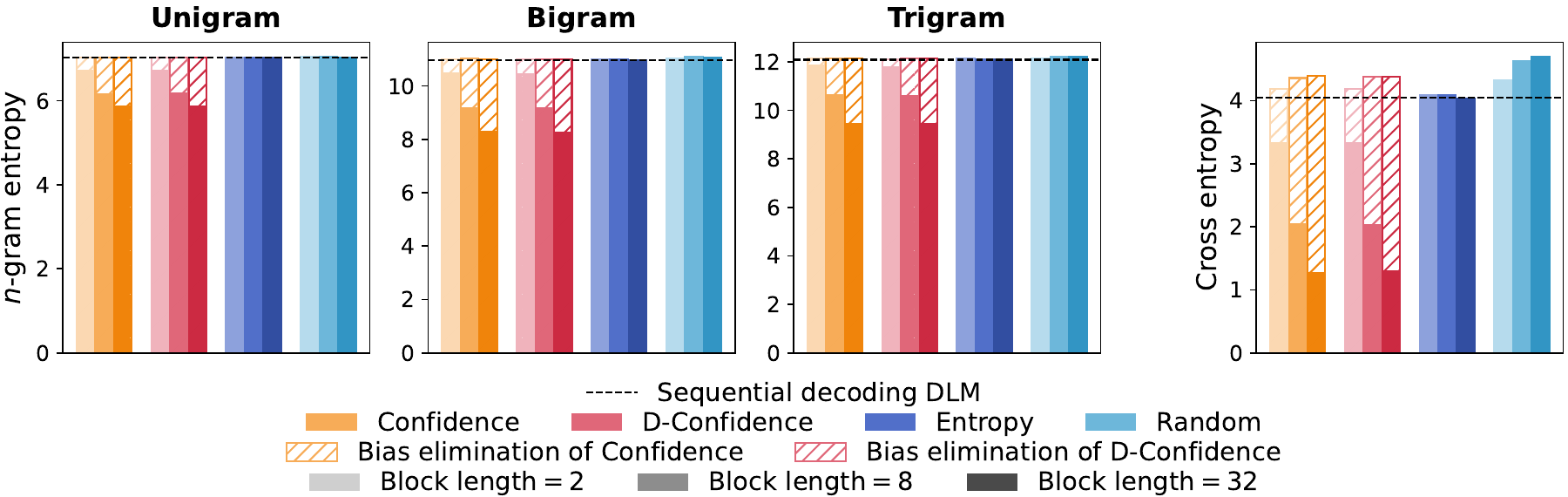}
    \end{center}
    \caption{$n$-gram entropy and cross-entropy (defined in \mbox{Eq.\eqref{eq:ce}}) across different DLM remasking strategies and block lengths. The labels denote low-confidence remasking (Confidence), dynamic low-confidence remasking (D-Confidence), high-entropy remasking (Entropy), random remasking (Random).
    The horizontal dashed line in the last subplot indicates $\mathcal{H}(p_{\text{seq}})$.
    The combined height of the bars for the confidence‑based strategies, including the hatched areas stacked on top, represents the results of the bias elimination experiments (refer to \Sref{point:bias_elimination}). This figure shows results for a single seed; see \Aref{app:inference_robust} for other seeds.}
    \label{fig:inference_entropy}
\end{figure}

\subsection{Results}
\label{sec:inference_results}

\paragraph{Entropy reduction phenomenon.} Following \Sref{sec:motivation}, we use $n$-gram entropy to approximate full entropy. As shown in \Fref{fig:inference_entropy}, confidence-based strategies consistently lower $n$-gram entropy compared to sequential decoding, whereas the other strategies have a comparatively minor influence. This suggests that \textbf{confidence-based decoding reduces entropy}.

To verify this, we establish the following inequality chain, comparing the full entropy under a confidence-based decoding strategy $\mathcal{H}(p_{\text{con}})$  with that under sequential decoding $\mathcal{H}(p_{\text{seq}})$:
\begin{equation}
\label{eq:inference_inq}
\mathcal H(p_{\text{con}}) \overset{\text{(i)}}{\leq} \mathcal{H} (p_{\text{con}}, p_{\text{seq}})\overset{\text{(ii)}}{\leq}\mathcal H(p_{\text{seq}}) -\delta,
\end{equation}

for some non-negligible $\delta>0$, where $\mathcal H(p_{\text{con}})$ and $\mathcal H(p_{\text{seq}})$ are defined in \mbox{Eq.\eqref{eq:entropy}}, and the intermediate term denotes the cross-entropy:
\begin{equation}
\label{eq:ce}
\mathcal H(p_{\text{con}},p_{\text{seq}}) \coloneqq -\mathbb E_{(c, x)\sim p_{\text{con}}} \left[\frac{1}{L}\log p_{\text{seq}}(x \mid c)\right].
\end{equation}
Inequality~(\hyperref[eq:inference_inq]{i}) holds by Gibbs' inequality~\citep{bremaud2012introduction}.
We empirically evaluate inequality~(\hyperref[eq:inference_inq]{ii}) by sampling $x$ from $p_{\text{con}}$ (or $p_{\text{seq}}$) for each prompt $c$, and computing $\log p_{\text{seq}}(x)$ using autoregressive factorization similar to \mbox{Eq.\eqref{eq:arfactorize}}. 

As shown in \Fref{fig:inference_entropy}, for confidence-based strategies, the cross-entropy $\mathcal{H}(p_{\text{con}},p_{\text{seq}})$ consistently falls far below the sequential decoding entropy $\mathcal H(p_{\text{seq}})$. This validates inequality~(\hyperref[eq:inference_inq]{ii}), completing the inequality chain. Conversely, inequality~(\hyperref[eq:inference_inq]{ii}) fails when replacing $p_{\text{con}}$ by other non-confidence-based strategies, thereby breaking the chain.\footnote{Note that the failure of inequality~(\hyperref[eq:inference_inq]{ii}) does not imply an increase in entropy, since \mbox{Eq.\eqref{eq:inference_inq}} does not hold in the reverse direction.}

\paragraph{Understanding of entropy reduction.}
One possible explanation for this entropy reduction is a \emph{distributional bias}. In theory, even for a perfectly trained DLM, confidence-based decoding yields a biased generation distribution.

As mentioned in \Sref{sec:diffusion}, confidence-based strategies first sample from the predicted distribution, then select remasking positions based on the confidence values of these sampling outcomes. By contrast, other strategies select positions based solely on the predicted distribution (i.e., by the distribution entropy or at random) before actual sampling occurs. Consequently, the outcome-dependence in confidence-based strategies introduces bias, whereas other strategies remain unbiased.

To further explore this explanation, we provide two supporting observations:

\begin{itemize}[leftmargin=*]
\item \textbf{Theoretical illustration under factorization assumption.}
To understand how this bias connects to entropy reduction, we first examine a single denoising step. Confidence-based strategies prefer positions with higher confidence values, i.e., predicted probabilities. As a result, they sharpen the predicted distribution by amplifying high probabilities and suppressing low ones, thereby reducing per-step entropy.

Building upon this, we then formally prove that entropy reduction holds for the final generation under a simplified factorization assumption (Theorem~\ref{theorem:maintext}).

\begin{customthm}{1}[Informal]
\label{theorem:maintext}
Assume that the data distribution at each position is independent of the others, so that the full distribution factorizes as $p_{\mathrm{data}}(x)=\prod_{i=1}^{L} p_{\mathrm{data}}^i(x^i)$, and that the DLM $p_\theta$ is optimally trained, i.e., for any masked position $i$ of a noised input $x_t$, $p_\theta(x_0^i \mid x_t)=p_{\mathrm{data}}^i(x_0^i)$. Let $p_{\mathrm{lcr}}$ and $p_{\mathrm{dlcr}}$ denote the generation distributions produced by applying low-confidence remasking and dynamic low-confidence remasking to the model, respectively. Then,
\begin{equation}
\mathcal H(p_{\mathrm{lcr}})\le \mathcal H(p_{\mathrm{data}}), \qquad
\mathcal H(p_{\mathrm{dlcr}})\le \mathcal H(p_{\mathrm{data}}).
\end{equation}
\end{customthm}
See Appendix~\ref{app:inference_theory} for the full statement and the proof.

\item \textbf{Bias elimination experiments.}\label{point:bias_elimination}
As an empirical validation, we apply a simple fix to the confidence-based strategy to eliminate this bias. After the strategy determines the positions to decode, we \emph{resample} the tokens at these positions. This modification breaks the correlation between the remasking position selection and the sampled token outcomes, thereby eliminating the bias. As shown in \Fref{fig:inference_entropy}, this fix effectively restores the $n$-gram entropy to a level similar to that of sequential decoding and invalidates inequality~(\hyperref[eq:inference_inq]{ii}). Details can be found in Appendix~\ref{app:inference_bias}.

\end{itemize}

\paragraph{Other influences.}

We further evaluate how remasking strategies affect semantic coherence and semantic diversity. Our results show that confidence-based strategies are relatively unstable: while they consistently increase semantic coherence across our multiple experiments, their influence on semantic diversity is highly dataset-dependent. Specifically, in our main experiments on the Fineweb dataset, confidence-based strategies increase semantic diversity, but ablation experiments on the TinyStories dataset show decreases in semantic diversity, suggesting a relatively unpredictable behavior. 

In contrast, other strategies have comparatively negligible effects on both metrics. For confidence-based strategies, applying bias elimination also removes their effects. These results suggest that the bias-based distinction may also determine whether a strategy affects semantic coherence and semantic diversity. See Appendix~\ref{app:inference_other} for more details.

\section{Conclusion and discussion}

In this work, we investigate the different generation behaviors of DLMs and ARMs by decoupling the effects of their training objectives and decoding procedures. Our control experiments identify two key mechanisms: First, the DLM training objective, particularly through the bidirectional context, drives the model toward higher semantic coherence and higher semantic diversity, while maintaining an entropy similar to that of ARMs. Second, the DLM decoding algorithms, specifically confidence-based remasking strategies, reduce entropy relative to autoregressive decoding. 

It is also worth noting that our observations imply that DLMs can achieve higher semantic diversity at the same or lower entropy levels than ARMs. This suggests that DLMs may relax the conventional text generation trade-off between diversity and certainty, pointing to their potential for diverse yet controllable text generation.
Together, our findings offer insights into the selection between DLMs and ARMs for different scenarios and may inform the design of training objectives and decoding algorithms in DLMs.

\section{Limitation and future work}

We conclude by outlining several limitations and directions for future work. Our experiments are limited to relatively small models; it remains to be verified that our findings generalize to larger models and datasets. Our theoretical analysis of the decoding procedure relies on several simplifying assumptions. Future work may relax these assumptions and extend the study to broader DLM architectures and decoding algorithms.

\section*{Acknowledgements}
We thank Haoming Wen from Tsinghua University for his valuable feedback and for his constructive suggestions on writing.

\bibliography{references}

@article{shannon1951prediction,
  title={Prediction and entropy of printed English},
  author={Shannon, Claude E},
  journal={Bell system technical journal},
  volume={30},
  number={1},
  pages={50--64},
  year={1951},
  publisher={Wiley Online Library}
}

@article{multi2024m3,
  title={Bge m3-embedding: Multi-lingual, multi-functionality, multi-granularity text embeddings through self-knowledge distillation},
  author={Chen, Jianlv and Xiao, Shitao and Zhang, Peitian and Luo, Kun and Lian, Defu and Liu, Zheng},
  journal={arXiv preprint arXiv:2402.03216},
  volume={4},
  number={5},
  year={2024}
}

@article{nie2025large,
  title={Large Language Diffusion Models},
  author={Nie, Shen and Zhu, Fengqi and You, Zebin and Zhang, Xiaolu and Ou, Jingyang and Hu, Jun and Zhou, Jun and Lin, Yankai and Wen, Ji-Rong and Li, Chongxuan},
  journal={arXiv preprint arXiv:2502.09992},
  year={2025}
}

@article{liu2025sdlm,
  title={Sequential Diffusion Language Models},
  author={Liu, Yangzhou and Cao, Yue and Li, Hao and Luo, Gen and Chen, Zhe and Wang, Weiyun and Liang, Xiaobo and Qi, Biqing and Wu, Lijun and Tian, Changyao and Zhang, Yanting and Li, Yuqiang and Lu, Tong and Qiao, Yu and Dai, Jifeng and Wang, Wenhai},
  journal={arXiv preprint arXiv:2509.24007},
  year={2025}
}

@article{ye2025dream,
  title={Dream 7b: Diffusion large language models},
  author={Ye, Jiacheng and Xie, Zhihui and Zheng, Lin and Gao, Jiahui and Wu, Zirui and Jiang, Xin and Li, Zhenguo and Kong, Lingpeng},
  journal={arXiv preprint arXiv:2508.15487},
  year={2025}
}

@article{gong2024scaling,
  title={Scaling diffusion language models via adaptation from autoregressive models},
  author={Gong, Shansan and Agarwal, Shivam and Zhang, Yizhe and Ye, Jiacheng and Zheng, Lin and Li, Mukai and An, Chenxin and Zhao, Peilin and Bi, Wei and Han, Jiawei and others},
  journal={arXiv preprint arXiv:2410.17891},
  year={2024}
}

@article{wu2025fast,
  title={Fast-dllm v2: Efficient block-diffusion llm},
  author={Wu, Chengyue and Zhang, Hao and Xue, Shuchen and Diao, Shizhe and Fu, Yonggan and Liu, Zhijian and Molchanov, Pavlo and Luo, Ping and Han, Song and Xie, Enze},
  journal={arXiv preprint arXiv:2509.26328},
  year={2025}
}

@article{cheng2025sdar,
  title={Sdar: A synergistic diffusion-autoregression paradigm for scalable sequence generation},
  author={Cheng, Shuang and Bian, Yihan and Liu, Dawei and Zhang, Linfeng and Yao, Qian and Tian, Zhongbo and Wang, Wenhai and Guo, Qipeng and Chen, Kai and Qi, Biqing and others},
  journal={arXiv preprint arXiv:2510.06303},
  year={2025}
}

@inproceedings{chang2022maskgit,
  title={Maskgit: Masked generative image transformer},
  author={Chang, Huiwen and Zhang, Han and Jiang, Lu and Liu, Ce and Freeman, William T},
  booktitle={Proceedings of the IEEE/CVF conference on computer vision and pattern recognition},
  pages={11315--11325},
  year={2022}
}

@article{bie2025llada2,
  title={Llada2. 0: Scaling up diffusion language models to 100b},
  author={Bie, Tiwei and Cao, Maosong and Chen, Kun and Du, Lun and Gong, Mingliang and Gong, Zhuochen and Gu, Yanmei and Hu, Jiaqi and Huang, Zenan and Lan, Zhenzhong and others},
  journal={arXiv preprint arXiv:2512.15745},
  year={2025}
}

@article{touvron2023llama,
  title={Llama: Open and efficient foundation language models},
  author={Touvron, Hugo and Lavril, Thibaut and Izacard, Gautier and Martinet, Xavier and Lachaux, Marie-Anne and Lacroix, Timoth{\'e}e and Rozi{\`e}re, Baptiste and Goyal, Naman and Hambro, Eric and Azhar, Faisal and others},
  journal={arXiv preprint arXiv:2302.13971},
  year={2023}
}

@article{loshchilov2017decoupled,
  title={Decoupled weight decay regularization},
  author={Loshchilov, Ilya and Hutter, Frank},
  journal={arXiv preprint arXiv:1711.05101},
  year={2017}
}

@article{vera2025embeddinggemma,
  title={Embeddinggemma: Powerful and lightweight text representations},
  author={Vera, Henrique Schechter and Dua, Sahil and Zhang, Biao and Salz, Daniel and Mullins, Ryan and Panyam, Sindhu Raghuram and Smoot, Sara and Naim, Iftekhar and Zou, Joe and Chen, Feiyang and others},
  journal={arXiv preprint arXiv:2509.20354},
  year={2025}
}

@article{reimers2021train,
  title={Train the best sentence embedding model ever with 1b training pairs},
  author={Reimers, Niels and Omar, E and Joao, G and Tom, A},
  journal={Community week using JAX/Flax for NLP CV. HugginFace},
  volume={6},
  year={2021}
}

@article{qwen3embedding,
  title={Qwen3 embedding: Advancing text embedding and reranking through foundation models},
  author={Zhang, Yanzhao and Li, Mingxin and Long, Dingkun and Zhang, Xin and Lin, Huan and Yang, Baosong and Xie, Pengjun and Yang, An and Liu, Dayiheng and Lin, Junyang and others},
  journal={arXiv preprint arXiv:2506.05176},
  year={2025}
}

@article{fineweb,
  title={The fineweb datasets: Decanting the web for the finest text data at scale},
  author={Penedo, Guilherme and Kydl{\'\i}{\v{c}}ek, Hynek and Lozhkov, Anton and Mitchell, Margaret and Raffel, Colin and Von Werra, Leandro and Wolf, Thomas and others},
  journal={Advances in Neural Information Processing Systems},
  volume={37},
  pages={30811--30849},
  year={2024}
}

@article{li2025survey,
  title={A survey on diffusion language models},
  author={Li, Tianyi and Chen, Mingda and Guo, Bowei and Shen, Zhiqiang},
  journal={arXiv preprint arXiv:2508.10875},
  year={2025}
}

@article{tseng2025diffusion,
  title={Diffusion-based Large Language Models Survey},
  author={Tseng, Chiung-Yi and Zhang, Danyang and Song, Junhao and Bi, Ziqian},
  journal={Authorea Preprints},
  year={2025},
  publisher={Authorea}
}

@article{qwen3,
    title={Qwen3 Technical Report}, 
    author={An Yang and Anfeng Li and Baosong Yang and Beichen Zhang and Binyuan Hui and Bo Zheng and Bowen Yu and Chang Gao and Chengen Huang and Chenxu Lv and Chujie Zheng and Dayiheng Liu and Fan Zhou and Fei Huang and Feng Hu and Hao Ge and Haoran Wei and Huan Lin and Jialong Tang and Jian Yang and Jianhong Tu and Jianwei Zhang and Jianxin Yang and Jiaxi Yang and Jing Zhou and Jingren Zhou and Junyang Lin and Kai Dang and Keqin Bao and Kexin Yang and Le Yu and Lianghao Deng and Mei Li and Mingfeng Xue and Mingze Li and Pei Zhang and Peng Wang and Qin Zhu and Rui Men and Ruize Gao and Shixuan Liu and Shuang Luo and Tianhao Li and Tianyi Tang and Wenbiao Yin and Xingzhang Ren and Xinyu Wang and Xinyu Zhang and Xuancheng Ren and Yang Fan and Yang Su and Yichang Zhang and Yinger Zhang and Yu Wan and Yuqiong Liu and Zekun Wang and Zeyu Cui and Zhenru Zhang and Zhipeng Zhou and Zihan Qiu},
    journal = {arXiv preprint arXiv:2505.09388},
    year={2025}
}

@article{qwen2.5,
    title   = {Qwen2.5 Technical Report}, 
    author  = {An Yang and Baosong Yang and Beichen Zhang and Binyuan Hui and Bo Zheng and Bowen Yu and Chengyuan Li and Dayiheng Liu and Fei Huang and Haoran Wei and Huan Lin and Jian Yang and Jianhong Tu and Jianwei Zhang and Jianxin Yang and Jiaxi Yang and Jingren Zhou and Junyang Lin and Kai Dang and Keming Lu and Keqin Bao and Kexin Yang and Le Yu and Mei Li and Mingfeng Xue and Pei Zhang and Qin Zhu and Rui Men and Runji Lin and Tianhao Li and Tingyu Xia and Xingzhang Ren and Xuancheng Ren and Yang Fan and Yang Su and Yichang Zhang and Yu Wan and Yuqiong Liu and Zeyu Cui and Zhenru Zhang and Zihan Qiu},
    journal = {arXiv preprint arXiv:2412.15115},
    year    = {2024}
}

@article{qwen2,
    title   = {Qwen2 Technical Report}, 
    author  = {An Yang and Baosong Yang and Binyuan Hui and Bo Zheng and Bowen Yu and Chang Zhou and Chengpeng Li and Chengyuan Li and Dayiheng Liu and Fei Huang and Guanting Dong and Haoran Wei and Huan Lin and Jialong Tang and Jialin Wang and Jian Yang and Jianhong Tu and Jianwei Zhang and Jianxin Ma and Jin Xu and Jingren Zhou and Jinze Bai and Jinzheng He and Junyang Lin and Kai Dang and Keming Lu and Keqin Chen and Kexin Yang and Mei Li and Mingfeng Xue and Na Ni and Pei Zhang and Peng Wang and Ru Peng and Rui Men and Ruize Gao and Runji Lin and Shijie Wang and Shuai Bai and Sinan Tan and Tianhang Zhu and Tianhao Li and Tianyu Liu and Wenbin Ge and Xiaodong Deng and Xiaohuan Zhou and Xingzhang Ren and Xinyu Zhang and Xipin Wei and Xuancheng Ren and Yang Fan and Yang Yao and Yichang Zhang and Yu Wan and Yunfei Chu and Yuqiong Liu and Zeyu Cui and Zhenru Zhang and Zhihao Fan},
    journal = {arXiv preprint arXiv:2407.10671},
    year    = {2024}
}

@article{eldan2023tinystories,
  title={Tinystories: How small can language models be and still speak coherent english?},
  author={Eldan, Ronen and Li, Yuanzhi},
  journal={arXiv preprint arXiv:2305.07759},
  year={2023}
}

@article{austin2021structured,
  title={Structured denoising diffusion models in discrete state-spaces},
  author={Austin, Jacob and Johnson, Daniel D and Ho, Jonathan and Tarlow, Daniel and Van Den Berg, Rianne},
  journal={Advances in neural information processing systems},
  volume={34},
  pages={17981--17993},
  year={2021}
}

@inproceedings{rombach2022high,
  title={High-resolution image synthesis with latent diffusion models},
  author={Rombach, Robin and Blattmann, Andreas and Lorenz, Dominik and Esser, Patrick and Ommer, Bj{\"o}rn},
  booktitle={Proceedings of the IEEE/CVF conference on computer vision and pattern recognition},
  pages={10684--10695},
  year={2022}
}

@article{ho2020denoising,
  title={Denoising diffusion probabilistic models},
  author={Ho, Jonathan and Jain, Ajay and Abbeel, Pieter},
  journal={Advances in neural information processing systems},
  volume={33},
  pages={6840--6851},
  year={2020}
}

@article{lou2023discrete,
  title={Discrete diffusion modeling by estimating the ratios of the data distribution},
  author={Lou, Aaron and Meng, Chenlin and Ermon, Stefano},
  journal={arXiv preprint arXiv:2310.16834},
  year={2023}
}

@article{wu2025fast1,
  title={Fast-dllm: Training-free acceleration of diffusion llm by enabling kv cache and parallel decoding},
  author={Wu, Chengyue and Zhang, Hao and Xue, Shuchen and Liu, Zhijian and Diao, Shizhe and Zhu, Ligeng and Luo, Ping and Han, Song and Xie, Enze},
  journal={arXiv preprint arXiv:2505.22618},
  year={2025}
}

@article{ben2025accelerated,
  title={Accelerated sampling from masked diffusion models via entropy bounded unmasking},
  author={Ben-Hamu, Heli and Gat, Itai and Severo, Daniel and Nolte, Niklas and Karrer, Brian},
  journal={arXiv preprint arXiv:2505.24857},
  year={2025}
}

@article{kim2025train,
  title={Train for the worst, plan for the best: Understanding token ordering in masked diffusions},
  author={Kim, Jaeyeon and Shah, Kulin and Kontonis, Vasilis and Kakade, Sham and Chen, Sitan},
  journal={arXiv preprint arXiv:2502.06768},
  year={2025}
}

@article{nie2024scaling,
  title={Scaling up masked diffusion models on text},
  author={Nie, Shen and Zhu, Fengqi and Du, Chao and Pang, Tianyu and Liu, Qian and Zeng, Guangtao and Lin, Min and Li, Chongxuan},
  journal={arXiv preprint arXiv:2410.18514},
  year={2024}
}

@article{feng2025theoretical,
  title={Theoretical benefit and limitation of diffusion language model},
  author={Feng, Guhao and Geng, Yihan and Guan, Jian and Wu, Wei and Wang, Liwei and He, Di},
  journal={arXiv preprint arXiv:2502.09622},
  year={2025}
}

@article{li2025diffusion,
  title={Diffusion language models know the answer before decoding},
  author={Li, Pengxiang and Zhou, Yefan and Muhtar, Dilxat and Yin, Lu and Yan, Shilin and Shen, Li and Vosoughi, Soroush and Liu, Shiwei},
  journal={arXiv preprint arXiv:2508.19982},
  year={2025}
}

@article{vaina2023diffusion,
  title={Diffusion Language Models Generation Can Be Halted Early},
  author={Vaina, Sofia Maria Lo Cicero and Balagansky, Nikita and Gavrilov, Daniil},
  journal={arXiv preprint arXiv:2305.10818},
  year={2023}
}

@article{goel2026skip,
  title={Skip to the Good Part: Representation Structure \& Inference-Time Layer Skipping in Diffusion vs. Autoregressive LLMs},
  author={Goel, Raghavv and Garrepalli, Risheek and Agrawal, Sudhanshu and Lott, Chris and Lee, Mingu and Porikli, Fatih},
  journal={arXiv preprint arXiv:2603.07475},
  year={2026}
}

@article{ni2025diffusion,
  title={Diffusion language models are super data learners},
  author={Ni, Jinjie and Liu, Qian and Dou, Longxu and Du, Chao and Wang, Zili and Yan, Hang and Pang, Tianyu and Shieh, Michael Qizhe},
  journal={arXiv preprint arXiv:2511.03276},
  year={2025}
}

@article{gao2025makes,
  title={What Makes Diffusion Language Models Super Data Learners?},
  author={Gao, Zitian and Luo, Haoming and Chen, Lynx and Liu, Jason Klein and Tao, Ran and Zhou, Joey and Dai, Bryan},
  journal={arXiv preprint arXiv:2510.04071},
  year={2025}
}

@article{prabhudesai2025diffusion,
  title={Diffusion beats autoregressive in data-constrained settings},
  author={Prabhudesai, Mihir and Wu, Mengning and Zadeh, Amir and Fragkiadaki, Katerina and Pathak, Deepak},
  journal={arXiv preprint arXiv:2507.15857},
  year={2025}
}

@article{shin2026understanding,
  title={Understanding the Reversal Curse Mitigation in Masked Diffusion Models through Attention and Training Dynamics},
  author={Shin, Sangwoo and Kim, BumJun and Lee, Kyelim and Jeon, Moongyu and No, Albert},
  journal={arXiv preprint arXiv:2602.02133},
  year={2026}
}

@article{yang2025powerful,
  title={On Powerful Ways to Generate: Autoregression, Diffusion, and Beyond},
  author={Yang, Chenxiao and Zhou, Cai and Wipf, David and Li, Zhiyuan},
  journal={arXiv preprint arXiv:2510.06190},
  year={2025}
}

@article{ni2026flexibility,
  title={The Flexibility Trap: Why Arbitrary Order Limits Reasoning Potential in Diffusion Language Models},
  author={Ni, Zanlin and Wang, Shenzhi and Yue, Yang and Yu, Tianyu and Zhao, Weilin and Hua, Yeguo and Chen, Tianyi and Song, Jun and Yu, Cheng and Zheng, Bo and others},
  journal={arXiv preprint arXiv:2601.15165},
  year={2026}
}

@article{li2025breaking,
  title={Breaking AR's Sampling Bottleneck: Provable Acceleration via Diffusion Language Models},
  author={Li, Gen and Cai, Changxiao},
  journal={arXiv preprint arXiv:2505.21400},
  year={2025}
}

@article{jiang2025diffusion,
  title={Diffusion Language Models are Provably Optimal Parallel Samplers},
  author={Jiang, Haozhe and Haghtalab, Nika and Chen, Lijie},
  journal={arXiv preprint arXiv:2512.25014},
  year={2025}
}

@article{wang2026analyzing,
  title={Analyzing Diffusion and Autoregressive Vision Language Models in Multimodal Embedding Space},
  author={Wang, Zihang and Zhang, Siyue and Zhao, Yilun and Yang, Jingyi and Song, Tingyu and Luu, Anh Tuan and Zhao, Chen},
  journal={arXiv preprint arXiv:2602.06056},
  year={2026}
}

@inproceedings{zhang2025diffusion,
  title={Diffusion vs. autoregressive language models: A text embedding perspective},
  author={Zhang, Siyue and Zhao, Yilun and Geng, Liyuan and Cohan, Arman and Tuan, Luu Anh and Zhao, Chen},
  booktitle={Proceedings of the 2025 Conference on Empirical Methods in Natural Language Processing},
  pages={4273--4303},
  year={2025}
}

@article{rahimi2026step,
  title={Step-Wise Refusal Dynamics in Autoregressive and Diffusion Language Models},
  author={Rahimi, Eliron and Hirshel, Elad and Himelstein, Rom and LeVi, Amit and Mendelson, Avi and Baskin, Chaim},
  journal={arXiv preprint arXiv:2602.02600},
  year={2026}
}

@article{wen2025devil,
  title={The devil behind the mask: An emergent safety vulnerability of diffusion llms},
  author={Wen, Zichen and Qu, Jiashu and Chen, Zhaorun and Lu, Xiaoya and Liu, Dongrui and Liu, Zhiyuan and Wu, Ruixi and Yang, Yicun and Jin, Xiangqi and Xu, Haoyun and others},
  journal={arXiv preprint arXiv:2507.11097},
  year={2025}
}

@article{zhang2025jailbreaking,
  title={Jailbreaking large language diffusion models: Revealing hidden safety flaws in diffusion-based text generation},
  author={Zhang, Yuanhe and Xie, Fangzhou and Zhou, Zhenhong and Li, Zherui and Chen, Hao and Wang, Kun and Guo, Yufei},
  journal={arXiv preprint arXiv:2507.19227},
  year={2025}
}

@inproceedings{dikov2025diffusion,
  title={Diffusion vs Autoregression: An Empirical Study on Code Comment Translation},
  author={Dikov, Alexander and Zvorygin, Vladimir and Malykh, Valentin},
  booktitle={2025 5th International Conference on Code Quality (ICCQ)},
  pages={55--63},
  year={2025},
  organization={IEEE}
}

@article{kitouni2024factorization,
  title={The factorization curse: Which tokens you predict underlie the reversal curse and more},
  author={Kitouni, Ouail and Nolte, Niklas and Bouchacourt, Diane and Williams, Adina and Rabbat, Mike and Ibrahim, Mark},
  journal={Advances in Neural Information Processing Systems},
  volume={37},
  pages={112329--112355},
  year={2024}
}

@article{xiong2025unveiling,
  title={Unveiling the potential of diffusion large language model in controllable generation},
  author={Xiong, Zhen and Cai, Yujun and Li, Zhecheng and Wang, Yiwei},
  journal={arXiv preprint arXiv:2507.04504},
  year={2025}
}

@article{song2025seed,
  title={Seed diffusion: A large-scale diffusion language model with high-speed inference},
  author={Song, Yuxuan and Zhang, Zheng and Luo, Cheng and Gao, Pengyang and Xia, Fan and Luo, Hao and Li, Zheng and Yang, Yuehang and Yu, Hongli and Qu, Xingwei and others},
  journal={arXiv preprint arXiv:2508.02193},
  year={2025}
}

@article{khanna2025mercury,
  title={Mercury: Ultra-fast language models based on diffusion},
  author={Khanna, Samar and Kharbanda, Siddhant and Li, Shufan and Varma, Harshit and Wang, Eric and Birnbaum, Sawyer and Luo, Ziyang and Miraoui, Yanis and Palrecha, Akash and Ermon, Stefano and others},
  journal={arXiv e-prints},
  pages={arXiv--2506},
  year={2025}
}

@article{zhu2025llada,
  title={LLaDA-MoE: A Sparse MoE Diffusion Language Model},
  author={Fengqi Zhu and Zebin You and Yipeng Xing and Zenan Huang and Lin Liu and Yihong Zhuang and Guoshan Lu and Kangyu Wang and Xudong Wang and Lanning Wei and Hongrui Guo and Jiaqi Hu and Wentao Ye and Tieyuan Chen and Chenchen Li and Chengfu Tang and Haibo Feng and Jun Hu and Jun Zhou and Xiaolu Zhang and Zhenzhong Lan and Junbo Zhao and Da Zheng and Chongxuan Li and Jianguo Li and Ji-Rong Wen},
  journal={arXiv preprint arXiv:2509.24389},
  year={2025}
}

@article{shi2024simplified,
  title={Simplified and generalized masked diffusion for discrete data},
  author={Shi, Jiaxin and Han, Kehang and Wang, Zhe and Doucet, Arnaud and Titsias, Michalis},
  journal={Advances in neural information processing systems},
  volume={37},
  pages={103131--103167},
  year={2024}
}

@article{ou2024your,
  title={Your absorbing discrete diffusion secretly models the conditional distributions of clean data},
  author={Ou, Jingyang and Nie, Shen and Xue, Kaiwen and Zhu, Fengqi and Sun, Jiacheng and Li, Zhenguo and Li, Chongxuan},
  journal={arXiv preprint arXiv:2406.03736},
  year={2024}
}

@article{sahoo2024simple,
  title={Simple and effective masked diffusion language models},
  author={Sahoo, Subham S and Arriola, Marianne and Schiff, Yair and Gokaslan, Aaron and Marroquin, Edgar and Chiu, Justin T and Rush, Alexander and Kuleshov, Volodymyr},
  journal={Advances in Neural Information Processing Systems},
  volume={37},
  pages={130136--130184},
  year={2024}
}

@article{parola2023speech,
  title={Speech disturbances in schizophrenia: Assessing cross-linguistic generalizability of NLP automated measures of coherence},
  author={Parola, Alberto and Lin, Jessica Mary and Simonsen, Arndis and Bliksted, Vibeke and Zhou, Yuan and Wang, Huiling and Inoue, Lana and Koelkebeck, Katja and Fusaroli, Riccardo},
  journal={Schizophrenia Research},
  volume={259},
  pages={59--70},
  year={2023},
  publisher={Elsevier}
}

@article{bedi2015automated,
  title={Automated analysis of free speech predicts psychosis onset in high-risk youths},
  author={Bedi, Gillinder and Carrillo, Facundo and Cecchi, Guillermo A and Slezak, Diego Fern{\'a}ndez and Sigman, Mariano and Mota, Nat{\'a}lia B and Ribeiro, Sidarta and Javitt, Daniel C and Copelli, Mauro and Corcoran, Cheryl M},
  journal={npj Schizophrenia},
  volume={1},
  number={1},
  pages={1--7},
  year={2015},
  publisher={Nature Publishing Group}
}

@article{kirk2023understanding,
  title={Understanding the effects of rlhf on llm generalisation and diversity},
  author={Kirk, Robert and Mediratta, Ishita and Nalmpantis, Christoforos and Luketina, Jelena and Hambro, Eric and Grefenstette, Edward and Raileanu, Roberta},
  journal={arXiv preprint arXiv:2310.06452},
  year={2023}
}

@article{marshall1979inequalities,
  title={Inequalities: theory of majorization and its applications},
  author={Marshall, Albert W and Olkin, Ingram and Arnold, Barry C},
  year={1979},
  publisher={Springer}
}

@book{bremaud2012introduction,
  title={An introduction to probabilistic modeling},
  author={Br{\'e}maud, Pierre},
  year={2012},
  publisher={Springer Science \& Business Media}
}

@article{arriola2025block,
  title={Block diffusion: Interpolating between autoregressive and diffusion language models},
  author={Arriola, Marianne and Gokaslan, Aaron and Chiu, Justin T and Yang, Zhihan and Qi, Zhixuan and Han, Jiaqi and Sahoo, Subham Sekhar and Kuleshov, Volodymyr},
  journal={arXiv preprint arXiv:2503.09573},
  year={2025}
}

@article{guo2025deepseek,
  title={Deepseek-r1: Incentivizing reasoning capability in llms via reinforcement learning},
  author={Guo, Daya and Yang, Dejian and Zhang, Haowei and Song, Junxiao and Wang, Peiyi and Zhu, Qihao and Xu, Runxin and Zhang, Ruoyu and Ma, Shirong and Bi, Xiao and others},
  journal={arXiv preprint arXiv:2501.12948},
  year={2025}
}

@misc{gemmateam2025gemma3technicalreport,
      title={Gemma 3 Technical Report}, 
      author={Gemma Team and Aishwarya Kamath and Johan Ferret and Shreya Pathak and Nino Vieillard and Ramona Merhej and Sarah Perrin and Tatiana Matejovicova and Alexandre Ramé and Morgane Rivière and Louis Rouillard and Thomas Mesnard and Geoffrey Cideron and Jean-bastien Grill and Sabela Ramos and Edouard Yvinec and Michelle Casbon and Etienne Pot and Ivo Penchev and Gaël Liu and Francesco Visin and Kathleen Kenealy and Lucas Beyer and Xiaohai Zhai and Anton Tsitsulin and Robert Busa-Fekete and Alex Feng and Noveen Sachdeva and Benjamin Coleman and Yi Gao and Basil Mustafa and Iain Barr and Emilio Parisotto and David Tian and Matan Eyal and Colin Cherry and Jan-Thorsten Peter and Danila Sinopalnikov and Surya Bhupatiraju and Rishabh Agarwal and Mehran Kazemi and Dan Malkin and Ravin Kumar and David Vilar and Idan Brusilovsky and Jiaming Luo and Andreas Steiner and Abe Friesen and Abhanshu Sharma and Abheesht Sharma and Adi Mayrav Gilady and Adrian Goedeckemeyer and Alaa Saade and Alex Feng and Alexander Kolesnikov and Alexei Bendebury and Alvin Abdagic and Amit Vadi and András György and André Susano Pinto and Anil Das and Ankur Bapna and Antoine Miech and Antoine Yang and Antonia Paterson and Ashish Shenoy and Ayan Chakrabarti and Bilal Piot and Bo Wu and Bobak Shahriari and Bryce Petrini and Charlie Chen and Charline Le Lan and Christopher A. Choquette-Choo and CJ Carey and Cormac Brick and Daniel Deutsch and Danielle Eisenbud and Dee Cattle and Derek Cheng and Dimitris Paparas and Divyashree Shivakumar Sreepathihalli and Doug Reid and Dustin Tran and Dustin Zelle and Eric Noland and Erwin Huizenga and Eugene Kharitonov and Frederick Liu and Gagik Amirkhanyan and Glenn Cameron and Hadi Hashemi and Hanna Klimczak-Plucińska and Harman Singh and Harsh Mehta and Harshal Tushar Lehri and Hussein Hazimeh and Ian Ballantyne and Idan Szpektor and Ivan Nardini and Jean Pouget-Abadie and Jetha Chan and Joe Stanton and John Wieting and Jonathan Lai and Jordi Orbay and Joseph Fernandez and Josh Newlan and Ju-yeong Ji and Jyotinder Singh and Kat Black and Kathy Yu and Kevin Hui and Kiran Vodrahalli and Klaus Greff and Linhai Qiu and Marcella Valentine and Marina Coelho and Marvin Ritter and Matt Hoffman and Matthew Watson and Mayank Chaturvedi and Michael Moynihan and Min Ma and Nabila Babar and Natasha Noy and Nathan Byrd and Nick Roy and Nikola Momchev and Nilay Chauhan and Noveen Sachdeva and Oskar Bunyan and Pankil Botarda and Paul Caron and Paul Kishan Rubenstein and Phil Culliton and Philipp Schmid and Pier Giuseppe Sessa and Pingmei Xu and Piotr Stanczyk and Pouya Tafti and Rakesh Shivanna and Renjie Wu and Renke Pan and Reza Rokni and Rob Willoughby and Rohith Vallu and Ryan Mullins and Sammy Jerome and Sara Smoot and Sertan Girgin and Shariq Iqbal and Shashir Reddy and Shruti Sheth and Siim Põder and Sijal Bhatnagar and Sindhu Raghuram Panyam and Sivan Eiger and Susan Zhang and Tianqi Liu and Trevor Yacovone and Tyler Liechty and Uday Kalra and Utku Evci and Vedant Misra and Vincent Roseberry and Vlad Feinberg and Vlad Kolesnikov and Woohyun Han and Woosuk Kwon and Xi Chen and Yinlam Chow and Yuvein Zhu and Zichuan Wei and Zoltan Egyed and Victor Cotruta and Minh Giang and Phoebe Kirk and Anand Rao and Kat Black and Nabila Babar and Jessica Lo and Erica Moreira and Luiz Gustavo Martins and Omar Sanseviero and Lucas Gonzalez and Zach Gleicher and Tris Warkentin and Vahab Mirrokni and Evan Senter and Eli Collins and Joelle Barral and Zoubin Ghahramani and Raia Hadsell and Yossi Matias and D. Sculley and Slav Petrov and Noah Fiedel and Noam Shazeer and Oriol Vinyals and Jeff Dean and Demis Hassabis and Koray Kavukcuoglu and Clement Farabet and Elena Buchatskaya and Jean-Baptiste Alayrac and Rohan Anil and Dmitry and Lepikhin and Sebastian Borgeaud and Olivier Bachem and Armand Joulin and Alek Andreev and Cassidy Hardin and Robert Dadashi and Léonard Hussenot},
      year={2025},
      eprint={2503.19786},
      archivePrefix={arXiv},
      primaryClass={cs.CL},
      url={https://arxiv.org/abs/2503.19786}, 
}

@article{abouelenin2025phi,
  title={Phi-4-mini technical report: Compact yet powerful multimodal language models via mixture-of-loras},
  author={Abouelenin, Abdelrahman and Ashfaq, Atabak and Atkinson, Adam and Awadalla, Hany and Bach, Nguyen and Bao, Jianmin and Benhaim, Alon and Cai, Martin and Chaudhary, Vishrav and Chen, Congcong and others},
  journal={arXiv preprint arXiv:2503.01743},
  year={2025}
}

@article{yang2025qwen3,
  title={Qwen3 technical report},
  author={Yang, An and Li, Anfeng and Yang, Baosong and Zhang, Beichen and Hui, Binyuan and Zheng, Bo and Yu, Bowen and Gao, Chang and Huang, Chengen and Lv, Chenxu and others},
  journal={arXiv preprint arXiv:2505.09388},
  year={2025}
}

@article{xie2025dream,
  title={Dream-coder 7b: An open diffusion language model for code},
  author={Xie, Zhihui and Ye, Jiacheng and Zheng, Lin and Gao, Jiahui and Dong, Jingwei and Wu, Zirui and Zhao, Xueliang and Gong, Shansan and Jiang, Xin and Li, Zhenguo and others},
  journal={arXiv preprint arXiv:2509.01142},
  year={2025}
}

@article{grattafiori2024llama,
  title={The llama 3 herd of models},
  author={Grattafiori, Aaron and Dubey, Abhimanyu and Jauhri, Abhinav and Pandey, Abhinav and Kadian, Abhishek and Al-Dahle, Ahmad and Letman, Aiesha and Mathur, Akhil and Schelten, Alan and Vaughan, Alex and others},
  journal={arXiv preprint arXiv:2407.21783},
  year={2024}
}

@article{Jiang2023Mistral7,
  title={Mistral 7B},
  author={Albert Qiaochu Jiang and Alexandre Sablayrolles and Arthur Mensch and Chris Bamford and Devendra Singh Chaplot and Diego de Las Casas and Florian Bressand and Gianna Lengyel and Guillaume Lample and Lucile Saulnier and L{\'e}lio Renard Lavaud and Marie-Anne Lachaux and Pierre Stock and Teven Le Scao and Thibaut Lavril and Thomas Wang and Timoth{\'e}e Lacroix and William El Sayed},
  journal={ArXiv},
  year={2023},
  volume={abs/2310.06825},
  url={https://api.semanticscholar.org/CorpusID:263830494}
}

@article{touvron2023llama2,
  title={Llama 2: Open foundation and fine-tuned chat models},
  author={Touvron, Hugo and Martin, Louis and Stone, Kevin and Albert, Peter and Almahairi, Amjad and Babaei, Yasmine and Bashlykov, Nikolay and Batra, Soumya and Bhargava, Prajjwal and Bhosale, Shruti and others},
  journal={arXiv preprint arXiv:2307.09288},
  year={2023}
}
\bibliographystyle{colm2026_conference}

\clearpage

\appendix

\section{Evaluation details of off-the-shelf models}
\label{app:offtheshelf}

We collect a dataset of texts generated by off-the-shelf DLMs and ARMs for evaluation.

\textbf{Generation configuration.} We randomly sample 1000 examples from the FineWeb dataset~\citep{fineweb}. For each example, we use its first 30 tokens as a prompt and generate 20 continuations using 10 ARMs and 10 DLMs, with a maximum length of 512 tokens for each continuation. \Tref{tab:datacollection} lists the employed models.

We use the prompt template ``\texttt{Continue the following text in approximately 500 words: [PROMPT]}'', along with model-specific chat templates. For generation hyperparameters, we adopt the default settings for each model, and we disable thinking mode for all models. These models all have different architectures and training procedures, so adopting the default generation setting is a fair basis for comparison since it demonstrates how each model is ``intended'' to be used.

\textbf{Data filtering and preprocessing.} Since degenerate data severely interfere with our analysis, we filter out degenerate samples with extremely short length or excessively repeated content. The number of samples remaining after filtering is reported in \Tref{tab:datacollection}.

\begin{table}[htb]
\begin{center}
\begin{tabular}{clc}
\toprule
\multicolumn{1}{c}{\bf Type} & \multicolumn{1}{l}{\bf Model} & \multicolumn{1}{c}{\bf Count}\\
\midrule
\multirow{10}{*}{ARM} & DeepSeek-R1-0528-Qwen3-8B~\citep{guo2025deepseek} & 946 \\
& Gemma-3-4B-it~\citep{gemmateam2025gemma3technicalreport} & 996 \\
& Llama-2-7B-chat~\citep{touvron2023llama2} & 997 \\
& Llama-3.1-8B-Instruct~\citep{grattafiori2024llama} & 999 \\
& Llama-3.2-3B-Instruct~\citep{grattafiori2024llama}  & 996 \\
& Mistral-7B-Instruct-v0.3~\citep{Jiang2023Mistral7} & 995 \\
& Phi-4-mini-Instruct~\citep{abouelenin2025phi} & 996 \\
& Qwen2-7B-Instruct~\citep{qwen2} & 993 \\
& Qwen2.5-7B-Instruct~\citep{qwen2.5} & 991 \\
& Qwen3-8B~\citep{yang2025qwen3} & 993 \\
\midrule
\multirow{10}{*}{DLM} & Dream-Coder-v0-Instruct-7B~\citep{xie2025dream} & 694 \\
& Dream-v0-Instruct-7B~\citep{ye2025dream} & 215 \\
& Fast-dLLM-v2-7B~\citep{wu2025fast} & 982 \\
& LLaDA-8B-Instruct~\citep{nie2025large} & 902 \\
& LLaDA-MoE-7B-A1B-Instruct~\citep{zhu2025llada} & 731 \\
& SDAR-4B-Chat~\citep{cheng2025sdar} & 854 \\
& SDAR-4B-Chat-b8~\citep{cheng2025sdar} & 861 \\
& SDAR-8B-Chat~\citep{cheng2025sdar} & 871 \\
& SDLM-3B-D4~\citep{liu2025sdlm} & 652 \\
& SDLM-3B-D8~\citep{liu2025sdlm} & 689 \\
\bottomrule
\end{tabular}
\end{center}
\caption{List of evaluated models and the number of remaining samples for each model.}
\label{tab:datacollection}
\end{table}

\textbf{Evaluation.} The per-model evaluation results are shown in \Tref{tab:offtheshelf}, and a scatter plot visualization of the same results is provided in \Fref{fig:offtheshelf_scatter}. For $n$-gram entropy, we use the Qwen3 tokenizer~\citep{qwen3} for tokenization. For semantic coherence and semantic diversity, we use \texttt{bge-m3}~\citep{multi2024m3} as the embedding model. Off-the-shelf DLMs on average exhibit lower $n$-gram entropy alongside higher semantic coherence and higher semantic diversity compared to ARMs.

\begin{table}[htb]
\begin{center}
\begin{tabular}{clccccc}
\toprule
\multirow{2.5}{*}{\bf Type} &\multirow{2.5}{*}{\bf Model} &\multicolumn{3}{c}{\bf $n$-gram entropy}&\multirow{2.5}{*}{\bf \makecell[c]{Semantic\\coherence}}&\multirow{2.5}{*}{\bf \makecell[c]{Semantic\\diversity}} \\
\cmidrule(lr){3-5}
& & Unigram & Bigram & Trigram \\
\midrule
\multirow{11.5}{*}{ARM} & DeepSeek-R1-0528-Qwen3-8B & 7.579 & 11.385 & 12.655 & 0.575 & 0.569 \\
& Gemma-3-4B-it & 7.411 & 11.241 & 12.639 & 0.557 & 0.572 \\
& Llama-2-7B-chat & 7.013 & 10.612 & 12.027 & 0.634 & 0.604 \\
& Llama-3.1-8B-Instruct & 7.129 & 10.816 & 12.316 & 0.626 & 0.604 \\
& Llama-3.2-3B-Instruct & 7.059 & 10.723 & 12.271 & 0.632 & 0.599 \\
& Mistral-7B-Instruct-v0.3 & 7.237 & 10.957 & 12.428 & 0.616 & 0.603 \\
& Phi-4-mini-Instruct & 7.592 & 11.477 & 12.725 & 0.597 & 0.587 \\
& Qwen2-7B-Instruct & 7.497 & 11.395 & 12.702 & 0.602 & 0.581 \\
& Qwen2.5-7B-Instruct & 7.464 & 11.205 & 12.641 & 0.604 & 0.590 \\
& Qwen3-8B & 7.185 & 11.046 & 12.560 & 0.601 & 0.583 \\
\cmidrule(lr){2-7}
& \bf Avg. & \bf 7.316 & \bf 11.086 & \bf 12.497 & \bf 0.604 & \bf 0.589 \\
\midrule
\multirow{11.5}{*}{DLM} & Dream-Coder-v0-Instruct-7B & 6.777 & 10.030 & 11.262 & 0.625 & 0.592 \\
& Dream-v0-Instruct-7B & 6.541 & \phantom{0}9.134 & \phantom{0}9.912 & 0.654 & 0.620 \\
& Fast-dLLM-v2-7B & 6.838 & 10.370 & 11.896 & 0.633 & 0.590 \\
& LLaDA-8B-Instruct & 6.853 & 10.292 & 11.713 & 0.661 & 0.594 \\
& LLaDA-MoE-7B-A1B-Instruct & 6.752 & 10.098 & 11.449 & 0.665 & 0.580 \\
& SDAR-4B-Chat & 6.972 & 10.506 & 11.898 & 0.641 & 0.611 \\
& SDAR-4B-Chat-b8 & 6.931 & 10.488 & 11.937 & 0.643 & 0.602 \\
& SDAR-8B-Chat & 6.918 & 10.518 & 11.963 & 0.640 & 0.605 \\
& SDLM-3B-D4 & 6.692 & \phantom{0}9.967 & 11.270 & 0.659 & 0.594 \\
& SDLM-3B-D8 & 6.659 & \phantom{0}9.926 & 11.257 & 0.665 & 0.589 \\
\cmidrule(lr){2-7}
& \bf Avg. & \bf 6.793 & \bf 10.133 & \bf 11.456 & \bf 0.649 & \bf 0.597 \\
\bottomrule
\end{tabular}
\end{center}
\caption{Evaluation results for 20 off-the-shelf ARMs and DLMs. ``Avg.'' denotes the average over models within each type.}
\label{tab:offtheshelf}
\end{table}

\begin{figure}[htb]
    \begin{center}
        \includegraphics[width=\linewidth]{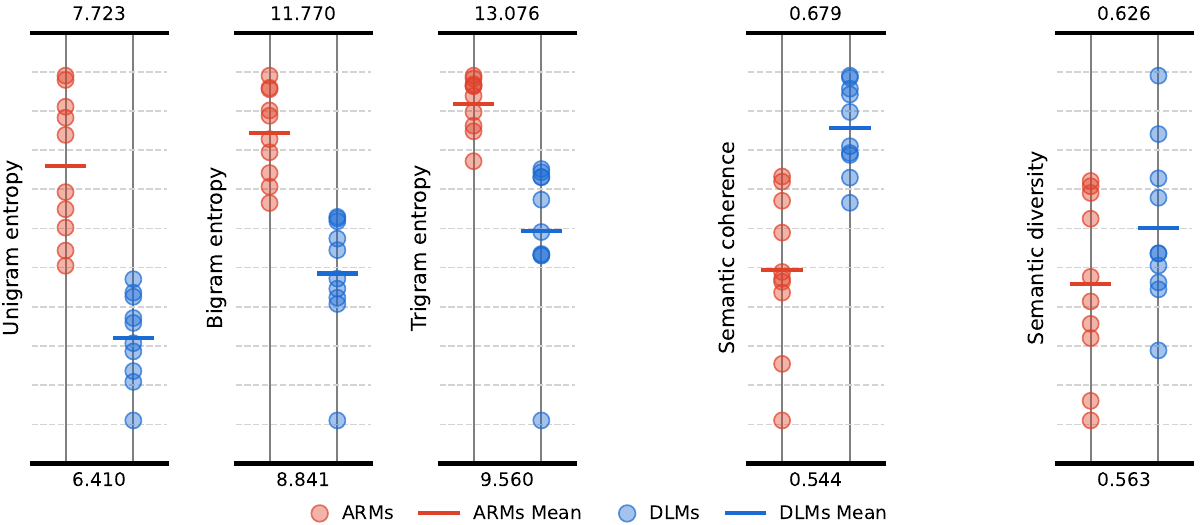}
    \end{center}
    \caption{Scatter plot visualization of the off-the-shelf evaluation results in \Tref{tab:offtheshelf}. Each point denotes one model, and the horizontal bars indicate the mean over models within each type.}
    \label{fig:offtheshelf_scatter}
\end{figure}
\clearpage

\section{Additional details for controlled experiments on training objectives}
\label{app:train}

\subsection{Training objectives selection}
\label{app:trainobj}

We select all possible combinations of the four decomposed components listed in \Sref{sec:decomposition} subject to two rules. First, when bidirectional context is used, input masking and label masking are necessarily included: without them, the model would have access to the full clean sequence at both input and output, reducing the task to identity mapping rather than prediction. Second, the weighting function is only varied when label masking is present, since $\omega_t = 1/t$ serves to normalize the loss magnitude. Specifically, it compensates for the fact that the number of loss terms (masked positions) is proportional to $t$.

\subsection{Experimental setup details}
\label{app:trainsetup}

For the main experiments, we use a 120M-parameter LLaMA architecture~\citep{touvron2023llama}, configured with a hidden size of 768, 12 attention heads, and 12 layers. We optimize all models using AdamW~\citep{loshchilov2017decoupled} optimizer with a global batch size of 1024 and a peak learning rate of 1e-3, which warms up over the first 100 steps and remains constant thereafter. Text generation is performed using standard sequential autoregressive decoding with a temperature of $1.0$, without top-$p$ or top-$k$ sampling. This setup ensures that the generated outputs faithfully reflect the learned distribution, without introducing additional distortion.

As a control variable, we also evaluate the next-token prediction loss at each checkpoint. We compute it using a uniform procedure: sample $x \sim \mathcal{D}_{\text{data}}$; for each $i \in \{2, \dots, L\}$, mask all tokens $x^j$ with $j \geq i$ to obtain $\tilde{x}$; then feed $\tilde{x}$ into the model and compute the negative log likelihood of predicting $x^i$ from $\tilde{x}$. Averaging this quantity over $i$ and then taking expectation over $x$ gives the next-token prediction loss used in evaluation.

\subsection{Robustness concern}
\label{app:trainrobust}

As an additional robustness check, we repeat the main controlled experiments in \Sref{sec:train} with three random seeds. Apart from the main experiments, we also verify the results across different model architectures and datasets. For the architecture ablation, we train another set of models using a 30M-parameter Qwen2 architecture~\citep{qwen2}, configured with a hidden size of 512, 8 attention heads, and 8 layers. Other hyperparameters remain the same as those in \Sref{sec:train}. For the dataset ablation, we train a set of models on the TinyStories dataset~\citep{eldan2023tinystories} for 5 epochs each. Because the texts in the TinyStories dataset are shorter, we reduce the context length to 128 and, accordingly, the learning rate to 5e-4. 

The results for the random seed, architecture, and dataset ablation are shown in \Fref{fig:trainseed},  \Fref{fig:qwentrain}, and \Fref{fig:tinystorytrain}, respectively. All additional experiments exhibit the same phenomena as in \Sref{sec:train}: clustering effects are observed in semantic coherence and semantic diversity, while entropy shows no significant differences, and context scope remains the primary driver. These results further confirm the robustness of our conclusion.

\begin{figure}[htb]
    \centering
    \begin{subfigure}{\linewidth}
        \includegraphics[width=\linewidth]{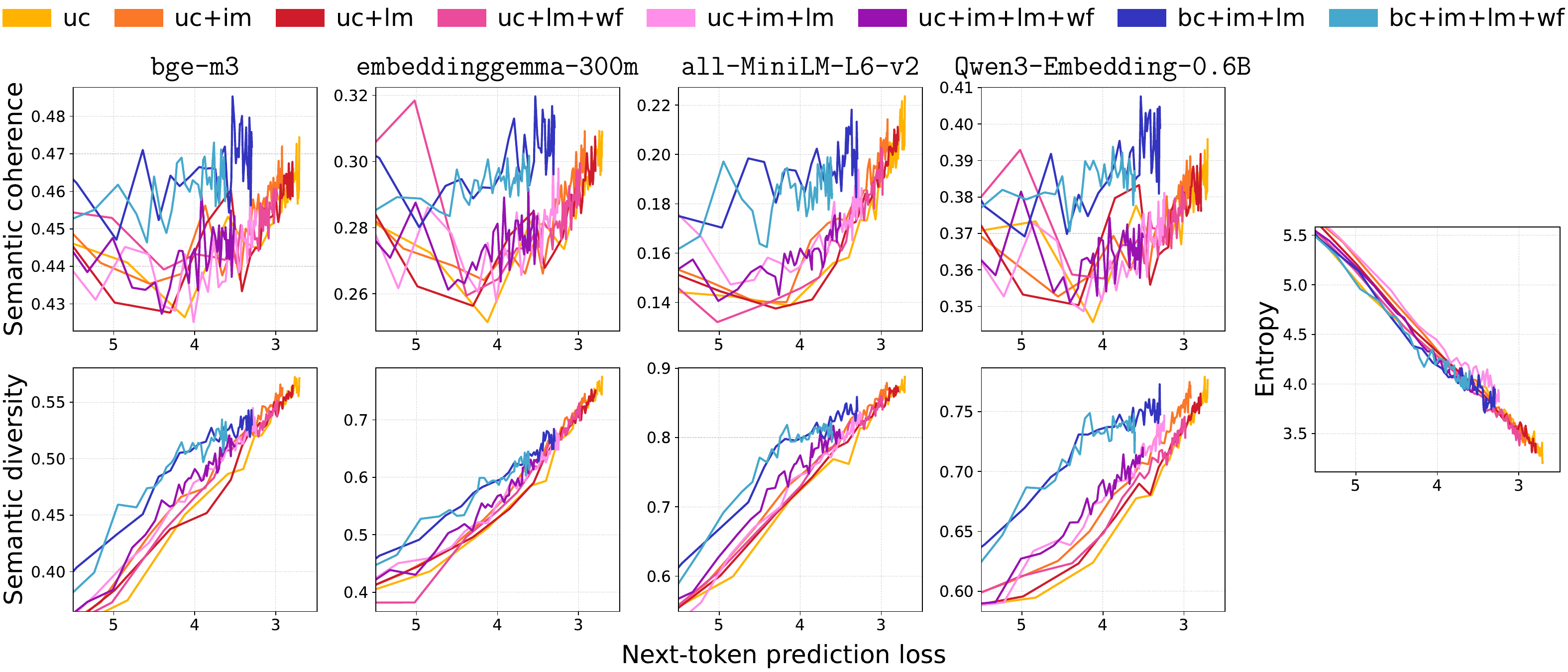}
        \caption{seed = 0}
        \label{fig:trainseed_0}
    \end{subfigure}
    
    \vspace{1em}
    
    \begin{subfigure}{\linewidth}
        \includegraphics[width=\linewidth]{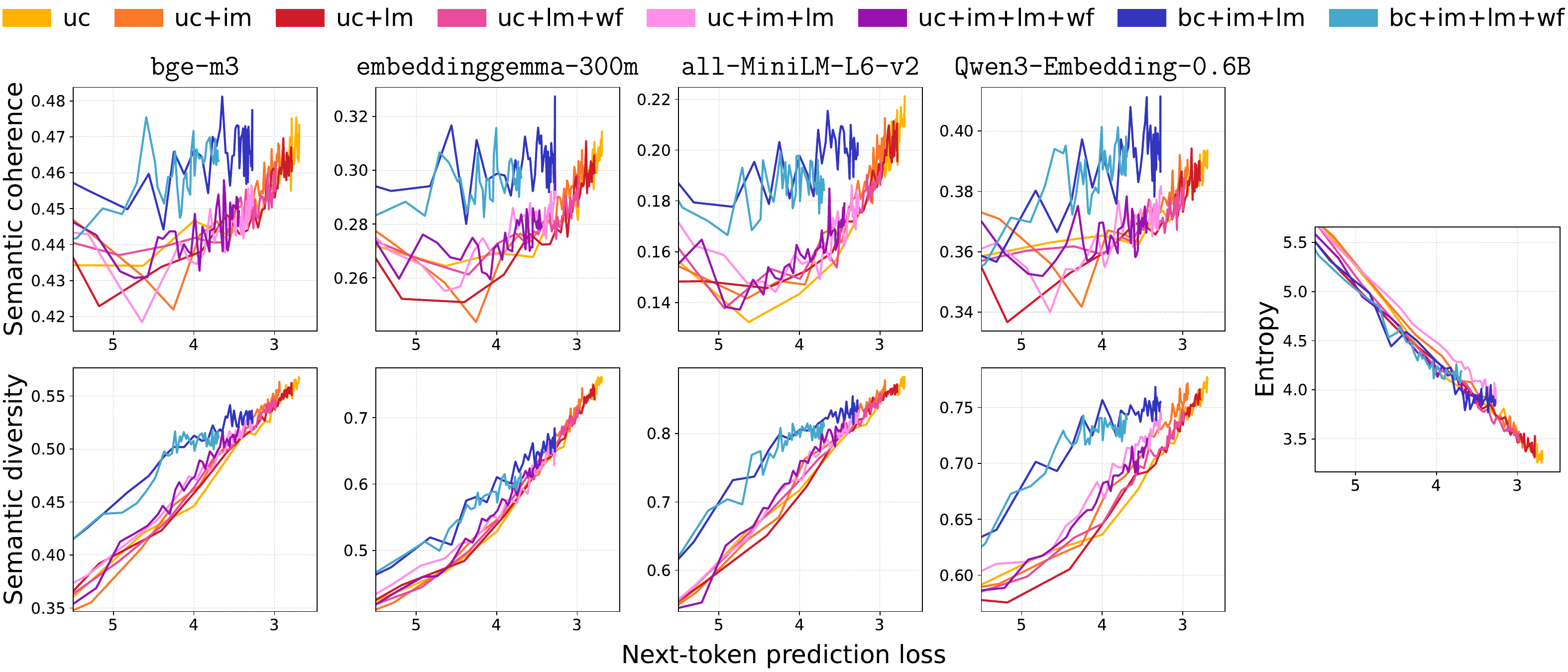}
        \caption{seed = 1}
        \label{fig:trainseed_1}
    \end{subfigure}
    
    \vspace{1em}
    
    \begin{subfigure}{\linewidth}
        \includegraphics[width=\linewidth]{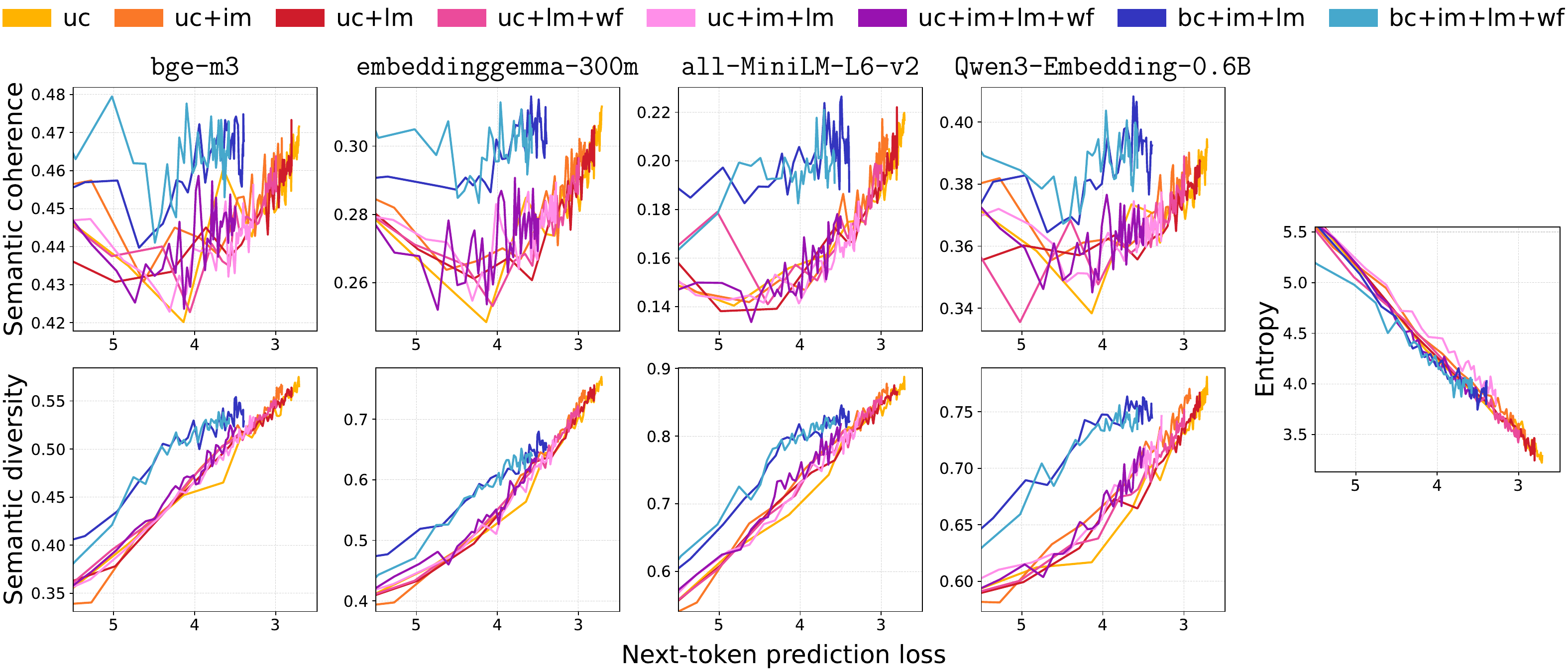}
        \caption{seed = 2}
        \label{fig:trainseed_2}
    \end{subfigure}
    
    \caption{Controlled experiments on interpolated training objectives with three different random seeds, presented in the same format as \Fref{fig:trainingresults}.}
    \label{fig:trainseed}
\end{figure}

\begin{figure}[htb]
    \begin{center}
    \includegraphics[width=\linewidth]{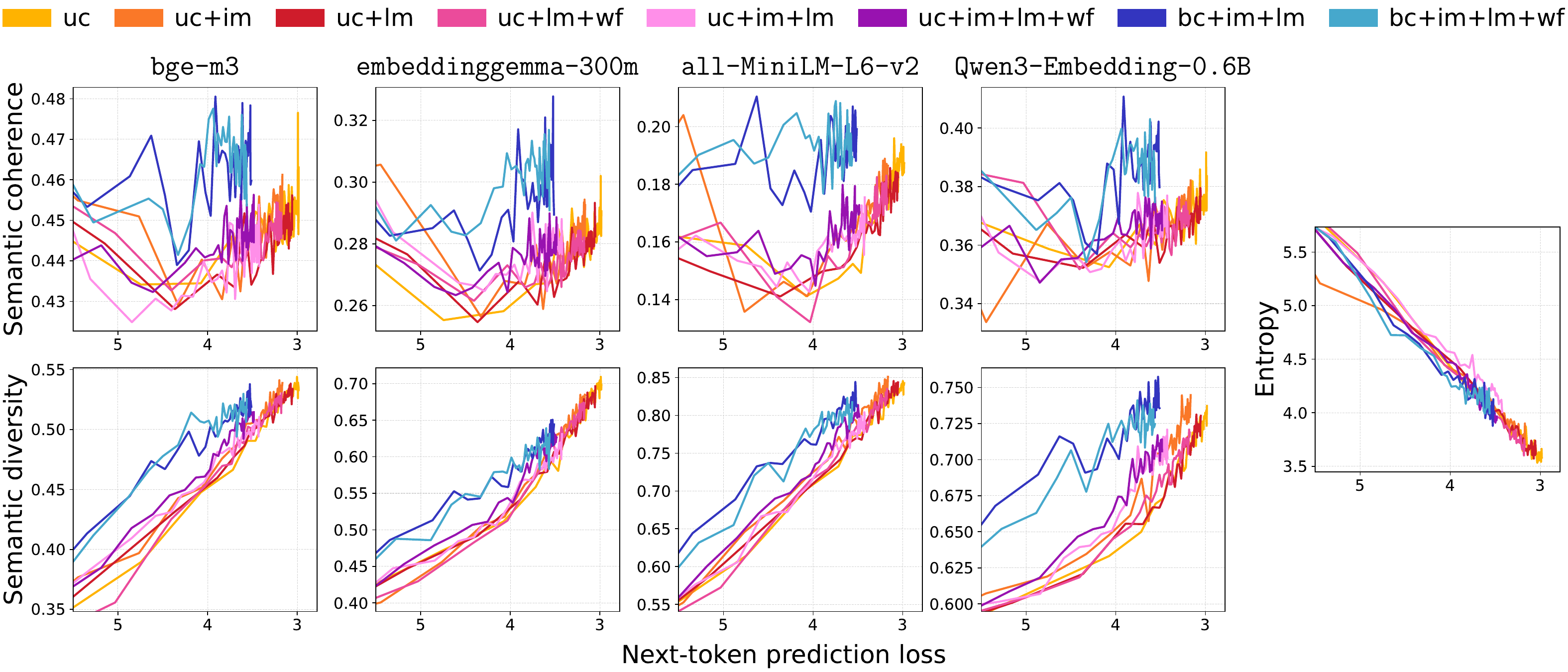}
    \end{center}
    \caption{Controlled experiments on interpolated training objectives using the Qwen2 architecture, presented in the same format as \Fref{fig:trainingresults}.}
    \label{fig:qwentrain}
\end{figure}

\begin{figure}[htb]
    \begin{center}
    \includegraphics[width=\linewidth]{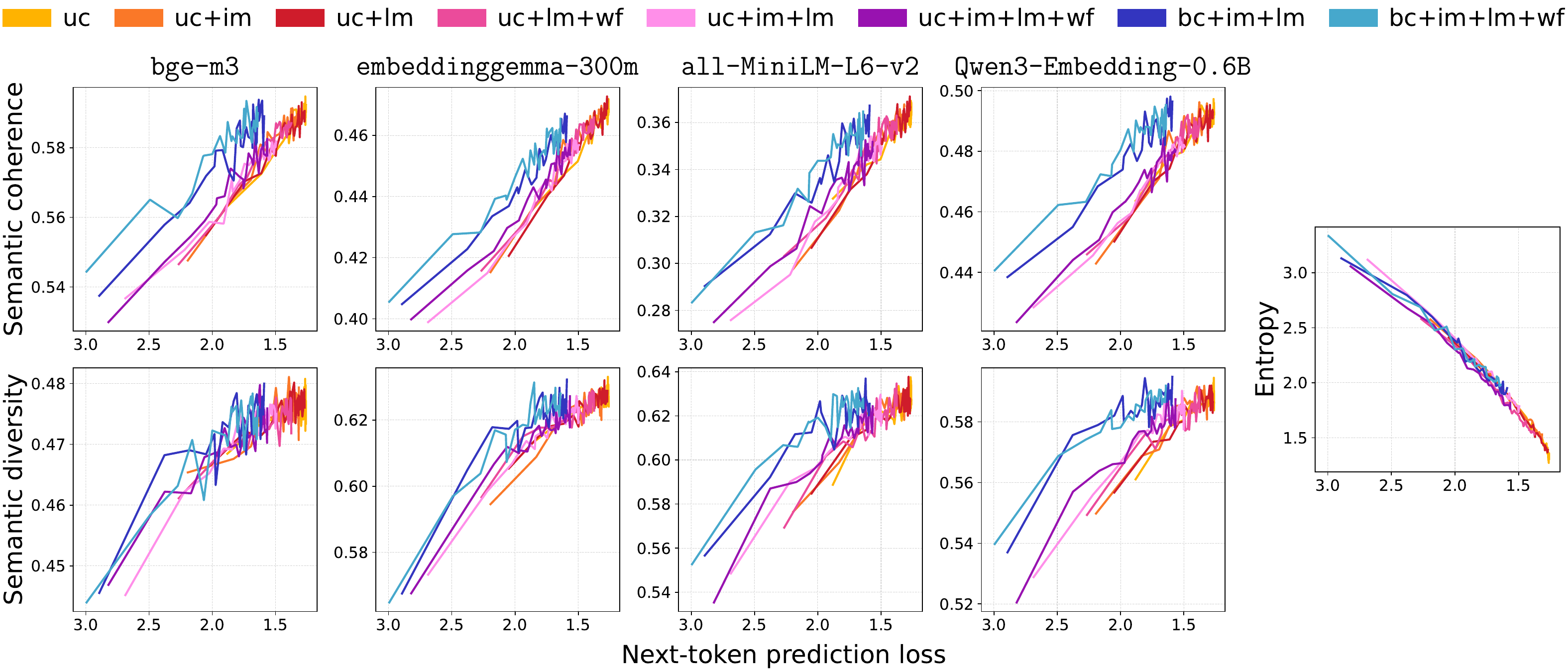}
    \end{center}
    \caption{Controlled experiments on interpolated training objectives using the TinyStories dataset, presented in the same format as \Fref{fig:trainingresults}.}
    \label{fig:tinystorytrain}
\end{figure}

\clearpage

\section{Supplementary analyses of decoding algorithms}

\subsection{Proof of entropy reduction}
\label{app:inference_theory}
We formally prove that confidence-based strategies reduce full entropy, assuming the data distribution factorizes into independent position-wise distributions, and the DLM is optimally trained.

Throughout this section, we use superscripts to index positions within a sequence. To contextualize the proof, we briefly recall the decoding mechanisms of confidence-based strategies. We assume a block length $B$ and a target number of denoising steps $N$. Let $p_\theta$ be the trained DLM. At each denoising step, given current partially decoded sequence $x_t$, the model samples a token from $p_\theta(x_0^i \mid x_t)$ for each masked position $i$ within the block. 

Low-confidence remasking decodes $B/N$ tokens within this block that have the highest prediction probabilities $p_\theta(x_0^i \mid x_t)$ (i.e. the confidence values), remasking the others. Dynamic low-confidence remasking extends this by additionally decoding any token in the block whose confidence value strictly exceeds a predefined threshold $\tau$, meaning that the number of tokens decoded per step can dynamically increase. Consistent with our experimental design, we set $N=B$, which targets decoding one token per step. Once all tokens within the current block are decoded, the generation process proceeds to the next block.

\begin{theorem}
\label{theorem:appendix}
Assume that the data distribution at each position is independent of the others, so that the full distribution factorizes as
\begin{equation}
p_{\mathrm{data}}(x)=\prod_{i=1}^{L} p_{\mathrm{data}}^i(x^i)
=: \prod_{i=1}^{L} q^i_{x^i}\ ,
\end{equation}
and that the DLM is optimally trained, i.e., for any masked position $i$ of a noised input $x_t$,
\begin{equation}
p_\theta(x_0^i \mid x_t)=p_{\mathrm{data}}^i(x_0^i)=q^i_{x_0^i}.
\end{equation}
Let $p_{\mathrm{lcr}}$ and $p_{\mathrm{dlcr}}$ denote the generation distributions induced by low-confidence remasking and dynamic low-confidence remasking, respectively. Then
\begin{equation}
\mathcal H(p_{\mathrm{lcr}})\le \mathcal H(p_{\mathrm{data}}), \qquad
\mathcal H(p_{\mathrm{dlcr}})\le \mathcal H(p_{\mathrm{data}}).
\end{equation}
\end{theorem}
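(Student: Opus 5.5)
The plan is to reduce the joint statement to a statement about single positions, and then use majorization \citep{marshall1979inequalities}. Under the factorization and optimality assumptions, $p_\theta(x_0^i\mid x_t)=q^i$ does not depend on $x_t$. So the generated tokens at different positions can be correlated only through the random \emph{order} in which positions are committed. Let $r^i$ denote the marginal law of the final token at position $i$ under $p_{\mathrm{lcr}}$ (respectively $p_{\mathrm{dlcr}}$). By subadditivity of entropy,
\begin{equation*}
L\,\mathcal H(p_{\mathrm{lcr}})\le \sum_{i=1}^L H(r^i),
\end{equation*}
while the factorization gives
\begin{equation*}
L\,\mathcal H(p_{\mathrm{data}})=\sum_{i=1}^L H(q^i).
\end{equation*}
It therefore suffices to show $H(r^i)\le H(q^i)$ for every $i$. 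By Schur-concavity of Shannon entropy, this follows if $r^i$ majorizes $q^i$.

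The key structural claim is that for each position $i$ there is a nondecreasing function $G^i:[0,1]\to[0,\infty)$ with
\begin{equation*}
r^i_a=q^i_a\,G^i(q^i_a)\qquad\text{for all } a\in\mathcal V .
\end{equation*}
Given this, majorization follows from a standard argument. Sort the tokens by decreasing $q^i_a$; this also sorts $r^i$ in decreasing order. The ratio $r^i_a/q^i_a$ is nondecreasing along this order, so every top-$k$ partial sum of $r^i$ dominates the corresponding partial sum of $q^i$. Indeed, a probability vector obtained from $q$ by reweighting with a factor that is monotone in $q_a$ moves mass toward larger entries. Ties among equal $q^i_a$ cause no trouble because those tokens receive the same weight.

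To prove the structural claim, I would induct on the set $S$ of still-masked positions in the current block. Blocks are handled independently, since the model ignores context. Fix $S\ni i$. In one denoising step, every $j\in S$ draws $y^j\sim q^j$ independently. Position $i$ is committed with value $a$ exactly when $q^i_a$ wins the selection: it is the top confidence under low-confidence remasking, or additionally exceeds $\tau$ under dynamic remasking. The conditional probability of this event given $y^i=a$, call it $P_{\mathrm{sel}}(q^i_a)$, is nondecreasing in $q^i_a$. This holds for both rules, with any fixed tie-breaking rule that does not depend on the token identities. If $i$ is not committed, its sample is discarded and position $i$ is resampled in a later step with a smaller masked set $S'$. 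The law of $S'$ may depend on the discarded value $a'$, but the future value at $i$ is drawn afresh. Writing $r^i_a(S)$ for the probability that position $i$ ends with value $a$ starting from masked set $S$, this gives
\begin{equation*}
r^i_a(S)=q^i_a P_{\mathrm{sel}}(q^i_a)+\sum_{a'}q^i_{a'}\bigl(1-P_{\mathrm{sel}}(q^i_{a'})\bigr)\,\mathbb E\bigl[r^i_a(S')\mid y^i=a',\ i\notin\text{selected}\bigr].
\end{equation*}
By the induction hypothesis, each $r^i_a(S')$ has the form $q^i_a\,G_{S'}(q^i_a)$ with $G_{S'}$ nondecreasing. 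A nonnegative mixture of such functions, plus the first term, keeps the form $q^i_a\,G(q^i_a)$ with $G$ nondecreasing. The base case $S=\{i\}$ gives $r^i=q^i$, i.e.\ $G\equiv 1$.

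The main obstacle is carrying out this induction rigorously. One must check that the mixing weights in the second term do not depend on the target value $a$, which holds because they depend only on $a'$ and on the other positions' draws. One must also verify the monotonicity of $P_{\mathrm{sel}}$ for dynamic low-confidence remasking, where several positions may be committed in one step. There, $i$ is committed if $q^i_a>\tau$ or if it is the overall argmax, and both events are monotone in $q^i_a$. Finally, ties must be handled carefully, for example by a random tie-breaking rule independent of the token values. Once the representation $r^i_a=q^i_a\,G^i(q^i_a)$ is established, the rest of the argument is the majorization lemma, Schur-concavity, and subadditivity.
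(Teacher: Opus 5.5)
Your proof is correct, but it reaches the per-position majorization by a different decomposition from the paper's.

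\textbf{The paper's route.} It applies the chain rule and shows that the law of $X^i$ given the prefix $X^{1:i-1}$ majorizes $q^i$. To get there it introduces the decoding-step variables $Z^{1:L}$ and conditions on all other tokens and on $Z$. It then argues via Bayes' rule that this fully conditioned law is proportional to $q^i_c w_c$ with $w_c$ non-increasing in $c$, and mixes back over the conditioning variables. This is done separately for the cases where $i$ is decoded alone or jointly with others, and low-confidence remasking is treated as dynamic remasking with $\tau=1$.

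\textbf{Your route.} You use subadditivity, so only the marginal $r^i$ matters. You obtain the monotone-reweighting form $r^i_a=q^i_a G^i(q^i_a)$ by a first-step recursion on the masked set. This uses the fact that the masked set evolves as a Markov chain whose future ignores the discarded sample at $i$. Both arguments finish with the same lemma, namely that reweighting $q$ by a factor monotone in $q_a$ yields a majorizing vector, followed by Schur-concavity.

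\textbf{What each buys.} Your route is more elementary and more explicit. It never conditions on future tokens or on the schedule. It replaces the paper's informal claim that ``most factors are independent of $c$'' with a recursion whose mixing weights visibly do not depend on $a$. It also handles both rules uniformly through the single monotone function $P_{\mathrm{sel}}$. Nothing is lost for the theorem: since $\mathcal H(X^i\mid X^{1:i-1})\le\mathcal H(X^i)$, your marginal bound already controls each chain-rule term. What the paper's route buys is a finer structural fact: sharpening holds conditionally on every realization of the other tokens and of the decoding schedule, not just on average.

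\textbf{One wording slip.} Along the order of decreasing $q^i_a$, the ratio $r^i_a/q^i_a=G^i(q^i_a)$ is non-increasing, not nondecreasing. It is nondecreasing as a function of $q^i_a$, and that is exactly what your partial-sum argument uses.
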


\begin{proof}
Without loss of generality, we assume that $q_1^i \ge q_2^i \ge \cdots \ge q_{|\mathcal V|}^i$ for all $i$. By definition, low-confidence remasking is a special case of dynamic low-confidence remasking when $\tau=1$. Thus, it suffices to prove $\mathcal H(p_{\mathrm{dlcr}})\le \mathcal H(p_{\mathrm{data}})$.

We assume the model starts generating from an empty prompt, as our proof can generalize the starting state to any sequence.

Let \(X \sim p_{\mathrm{dlcr}}\), $X=(X^1,X^2,...,X^L)$. By the chain rule of entropy,
\begin{equation}
\mathcal H(p_{\mathrm{dlcr}})
= \mathcal H(X^1) + \mathcal H(X^2 \mid X^1) + \cdots + \mathcal H(X^L \mid X^{1:L-1}).
\end{equation}
It is sufficient to show that for every position \(i\) and possible prefix sequence $x^{1:i-1}$, we have
\begin{equation}
\label{eq:proof_1}
\forall k,\ \sum_{c=1}^{k} p_{\mathrm{dlcr}}(X^i=c \mid X^{1:i-1}=x^{1:i-1})
\ge
\sum_{c=1}^{k} q_c^i.
\end{equation}
This is a majorization relation~\citep{marshall1979inequalities}. Since the entropy function is Schur-concave, the majorization implies
\begin{equation}
\mathcal H(X^i \mid X^{1:i-1}=x^{1:i-1}) \le \mathcal H(q^i)=\mathcal H(p_{\mathrm{data}}^i),
\end{equation}
and taking expectation over \(x^{1:i-1}\) gives
\begin{equation}
\mathcal H(X^i \mid X^{1:i-1}) \le \mathcal H(p_{\mathrm{data}}^i).
\end{equation}
Summing over \(i\) then yields our desired inequality
\begin{equation}
\mathcal H(p_{\mathrm{dlcr}}) \le \mathcal H(p_{\mathrm{data}}).
\end{equation}

We prove \mbox{Eq.\eqref{eq:proof_1}} by introducing latent variable \(Z\), where \(Z^i\) denotes the decoding step at which \(X^i\) is generated. It is sufficient to show that for any configuration \(z^1,\ldots,z^L\) and any realization of the other tokens \(x^1,\dots,x^{i-1},x^{i+1},\dots,x^L\) such that
\begin{equation}
p_{\mathrm{dlcr}}(X^{1:i-1}=x^{1:i-1},X^{i+1:L}=x^{i+1:L},Z^{1:L}=z^{1:L})>0,
\end{equation}
we have for all \(k\),
\begin{equation}
\label{eq:proof_3}
\sum_{c=1}^{k}
p_{\mathrm{dlcr}}\!\left(
X^i=c \,\middle|\,
X^{1:i-1}=x^{1:i-1},X^{i+1:L}=x^{i+1:L},Z^{1:L}=z^{1:L}
\right)
\ge
\sum_{c=1}^{k} q_c^i.
\end{equation}
This is sufficient because the conditional distribution given \(X^{1:i-1}\) can be expressed as a mixture over the remaining variables, and majorization is preserved under such mixtures.

Let
\begin{equation}
S=\{j:z^j=z^i\}, \qquad T=\{j:(z^j\ge z^i) \land (j \text{ and } i \text{ are in the same block})\},
\end{equation}
be the set of positions decoded at step \(z^i\) and the set of positions within the same block as \(i\) that are not decoded before step \(z^i\). We consider the following two cases.

\paragraph{Case 1: \(|S|=1\).}
In this case, the decoding step of \(X^i\) decodes exactly one position, namely \(i\). The event that \(X^i\) is the only token that is decoded at this step occurs if and only if all positions in \(T\setminus\{i\}\) are remasked at the same step. Under the optimal model assumption, decoding \(X^i=c\) requires that for every \(j \in T \setminus \{i\}\), the sampled token at position \(j\) has confidence no larger than \(q_c^i\) and does not exceed the threshold \(\tau\). We claim that
\begin{equation}
\label{eq:proof_2}
\begin{gathered}
p_{\mathrm{dlcr}}\!\left(
X^i=c \,\middle|\,
X^{1:i-1}=x^{1:i-1},X^{i+1:L}=x^{i+1:L},Z^{1:L}=z^{1:L}
\right)\\
\propto
q_c^i
\prod_{j\in T\setminus\{i\}}
\sum_{r:\, q_r^j \le \min(\tau,q_c^i)} q_r^j.
\end{gathered}
\end{equation}
To understand this proportionality, note that by Bayes' theorem, the conditional probability is proportional to the joint probability:
\begin{equation}
\begin{gathered}
p_{\mathrm{dlcr}}\!\left(
X^i=c \,\middle|\,
X^{1:i-1}=x^{1:i-1},X^{i+1:L}=x^{i+1:L},Z^{1:L}=z^{1:L}
\right)\\
\propto
p_{\mathrm{dlcr}}\!\left(
X^i=c ,
X^{1:i-1}=x^{1:i-1},X^{i+1:L}=x^{i+1:L},Z^{1:L}=z^{1:L}
\right).
\end{gathered}
\end{equation}
Expanding this joint probability into a product of step-wise probabilities reveals that most of the factors are independent of $c$ and can be absorbed into the normalization constant. The only terms dependent on $c$ occur at the specific step where $X^i$ is decoded: (1) the probability $q_c^i$ of sampling $X^i=c$, and (2) the probability that tokens sampled at the remaining positions $T\setminus\{i\}$ fall below $\min(\tau, q_c^i)$ and are thus remasked.

From \mbox{Eq.\eqref{eq:proof_2}}, define
\begin{equation}
\label{eq:wc}
w_c :=
\prod_{j\in T\setminus\{i\}}
\sum_{r:\, q_r^j \le \min(\tau,q_c^i)} q_r^j.
\end{equation}
Since \(q_c^i\) is non-increasing in \(c\), the weight \(w_c\) is also non-increasing in \(c\). Therefore, after normalization, the distribution
\begin{equation}
p_{\mathrm{dlcr}}\!\left(
X^i=c \,\middle|\,
X^{1:i-1}=x^{1:i-1},X^{i+1:L}=x^{i+1:L},Z^{1:L}=z^{1:L}
\right) \propto q_c^i w_c
\end{equation}
majorizes \(q^i\), i.e.,
\begin{equation}
\forall k,\ \sum_{c=1}^{k} p_{\mathrm{dlcr}}\!\left(
X^i=c \,\middle|\,
X^{1:i-1}=x^{1:i-1},X^{i+1:L}=x^{i+1:L},Z^{1:L}=z^{1:L}
\right) \ge \sum_{c=1}^{k} q_c^i.
\end{equation}

\paragraph{Case 2: \(|S|>1\).}
In this case, the decoding step of \(X^i\) simultaneously decodes multiple positions. Thus \(X^i\) is decoded to \(c\) if and only if its confidence \(q_c^i\) exceeds the threshold \(\tau\). Hence,
\begin{equation}
\begin{gathered}
p_{\mathrm{dlcr}}\!\left(
X^i=c \,\middle|\,
X^{1:i-1}=x^{1:i-1},X^{i+1:L}=x^{i+1:L},Z^{1:L}=z^{1:L}
\right)\\
\propto
q_c^i \mathbf{1}[q_c^i>\tau].
\end{gathered}
\end{equation}
As before, this proportionality holds because expanding the joint probability reveals that most step-wise factors are independent of $c$ and absorbed into the normalization constant. The only terms dependent on $c$ occur at the specific step where $X^i$ is decoded: (1) the probability $q_c^i$ of sampling $c$, and (2) the threshold constraint $\mathbf{1}[q_c^i > \tau]$.

Let
\begin{equation}
w_c := \mathbf{1}[q_c^i>\tau].
\end{equation}
Since \(q_c^i\) is non-increasing in \(c\), the weight \(w_c\) is also non-increasing in \(c\). Therefore, after normalization, the distribution
\begin{equation}
p_{\mathrm{dlcr}}\!\left(
X^i=c \,\middle|\,
X^{1:i-1}=x^{1:i-1},X^{i+1:L}=x^{i+1:L},Z^{1:L}=z^{1:L}
\right)\propto q_c^i w_c
\end{equation}
majorizes \(q^i\), i.e.,
\begin{equation}
\forall k,\ \sum_{c=1}^{k} p_{\mathrm{dlcr}}\!\left(
X^i=c \,\middle|\,
X^{1:i-1}=x^{1:i-1},X^{i+1:L}=x^{i+1:L},Z^{1:L}=z^{1:L}
\right) \ge \sum_{c=1}^{k} q_c^i.
\end{equation}

Combining the two cases, we obtain \mbox{Eq.\eqref{eq:proof_3}}, completing the proof.
\end{proof}

\textbf{Remark.} Ambiguity may arise when there exist $i,j$ and $i',j'$ such that $q_j^i = q_{j'}^{i'}$, as it is not defined which token takes priority for remasking. However, this is not a significant issue: first, such exact equality is extremely rare in practice; second, even if it occurs, as long as we assign an oracle order to these tokens, \mbox{Eq.\eqref{eq:wc}} remains non-increasing in $c$, and the rest of the proof remains unaffected and valid.

\newpage

\subsection{Bias elimination experiments}
\label{app:inference_bias}

In \Sref{sec:inference_results}, we suggest that the entropy reduction observed in confidence-based remasking strategies stems from a theoretical distributional bias. To support this explanation, we introduce a simple modification to the confidence-based strategy: after selecting the decoding positions based on confidence values, we \textbf{resample} the decoded tokens at these positions. This breaks the correlation between position selection and the sampled tokens, thereby eliminating the bias. 

As shown in \Fref{fig:inference_entropy}, this modification restores $n$-gram entropy to the baseline level and invalidates inequality~(\hyperref[eq:inference_inq]{ii}) in \mbox{Eq.\eqref{eq:inference_inq}}. This confirms that the bias causes the $n$-gram entropy reduction. 
These phenomena are consistent across seeds, datasets and architectures (\Fref{fig:finewebinference_entropy}, \Fref{fig:qweninference_entropy} and \Fref{fig:tinystoryinference_entropy}).

Beyond entropy, bias elimination also removes the influences of confidence-based strategies on semantic coherence and semantic diversity (\Fref{fig:finewebinference_semantic}, \Fref{fig:qwen2inference_semantic}, and \Fref{fig:tinystoryinference_semantic}). This suggests that the bias-based distinction could be generalized to these metrics as well.

\subsection{Ablations across seeds, datasets and architectures}
\label{app:inference_robust}

Consistent with Appendix~\ref{app:trainrobust}, we further validate our main findings by repeating the main experiments with two additional random seeds, using the Qwen2 architecture, and on the TinyStories dataset. As shown in \Fref{fig:finewebinference_entropy}, \Fref{fig:qweninference_entropy}, and \Fref{fig:tinystoryinference_entropy}, we consistently observe entropy reduction, including decreases in $n$-gram entropy and the validity of inequality~(\hyperref[eq:inference_inq]{ii}). We also observe that bias elimination remains effective. These results confirm the robustness of our findings.

\subsection{Effects on semantic coherence and semantic diversity}
\label{app:inference_other}

We also evaluate semantic coherence and semantic diversity across all remasking strategies and configurations. As shown in \Fref{fig:finewebinference_semantic}, \Fref{fig:qwen2inference_semantic}, and \Fref{fig:tinystoryinference_semantic}, non-confidence-based strategies have much smaller effects on both metrics compared to confidence-based strategies. Confidence-based strategies consistently increase semantic coherence, while their effect on semantic diversity depends highly on the dataset. As shown in \Fref{fig:finewebinference_semantic} and \Fref{fig:qwen2inference_semantic}, they increase semantic diversity on the FineWeb dataset, consistent across seeds and architectures. However, when we switch to the TinyStories dataset (\Fref{fig:tinystoryinference_semantic}), confidence-based strategies reduce semantic diversity.

\clearpage

\begin{figure}[p]
    \begin{subfigure}{\linewidth}
        \centering
        \includegraphics[width=\linewidth]{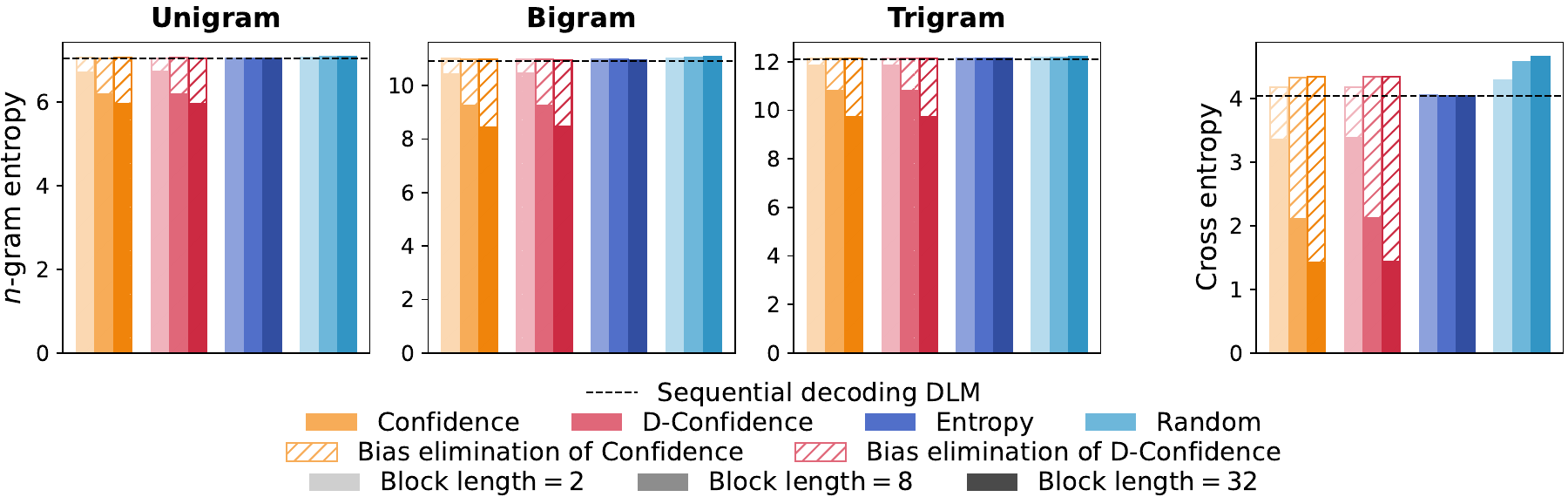}
        \caption{seed = 1}
    \end{subfigure}
    
    \vspace{1em}
    
    \begin{subfigure}{\linewidth}
        \centering
        \includegraphics[width=\linewidth]{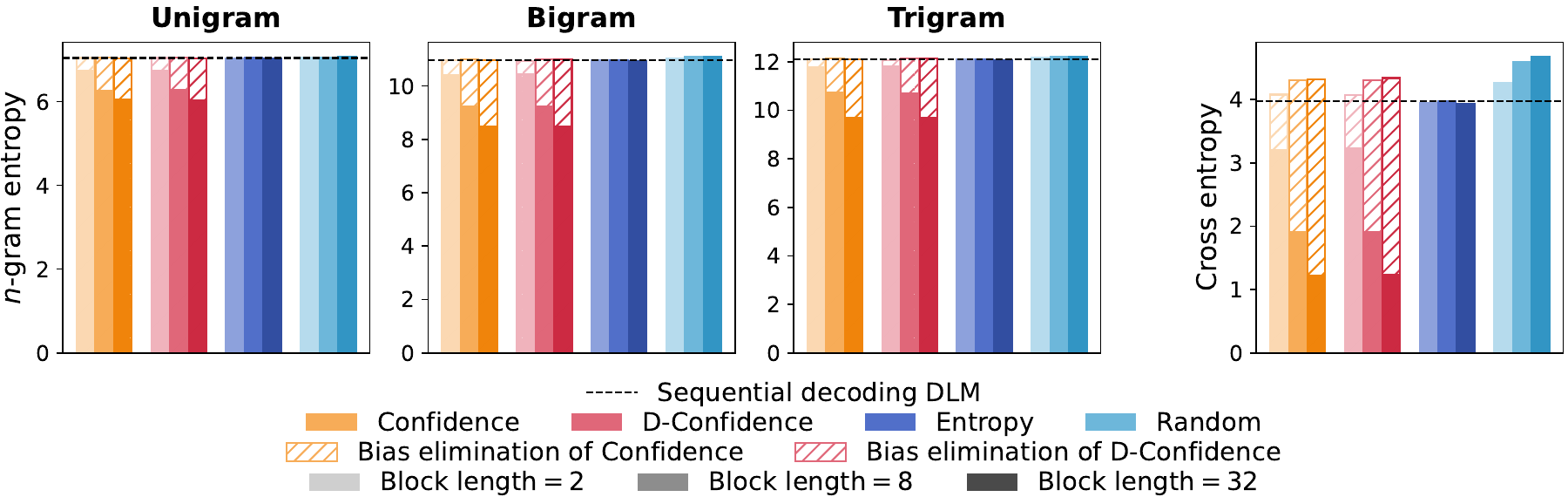}
        \caption{seed = 2}
    \end{subfigure}
    
    \caption{$n$-gram entropy and cross-entropy (defined in \mbox{Eq.\eqref{eq:ce}}) across different remasking strategies with other seeds. The format follows \Fref{fig:inference_entropy}, which displays the seed = 0 results.}
    \label{fig:finewebinference_entropy}
\end{figure}

\clearpage

\begin{figure}[p]
    \begin{center}
    \includegraphics[width=\linewidth]{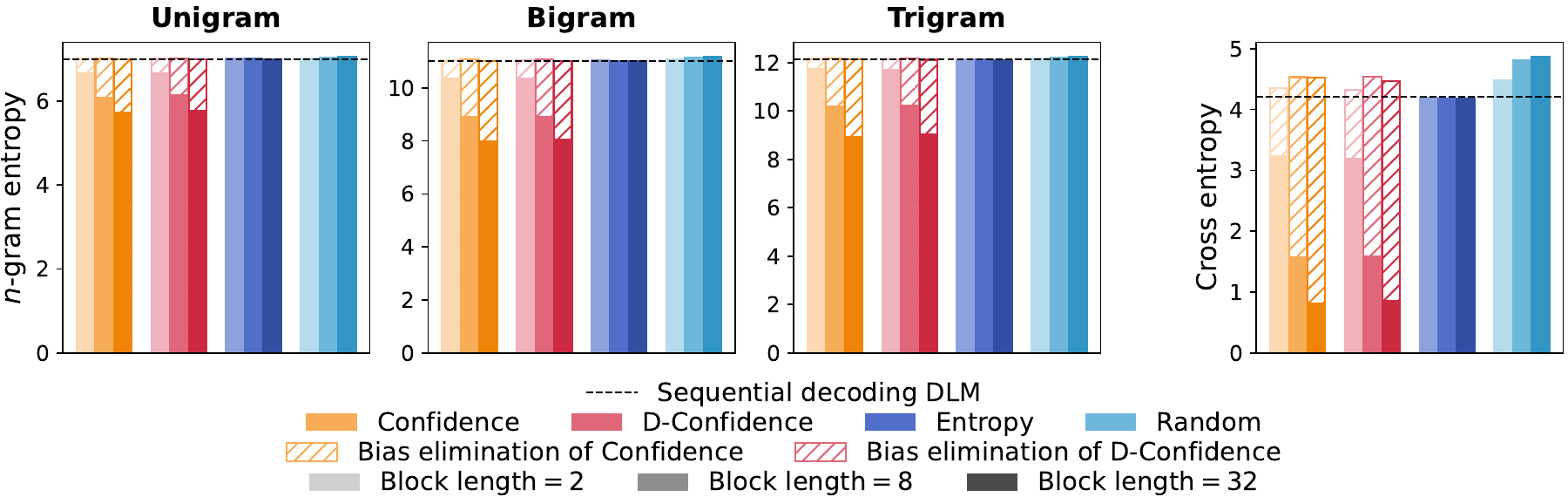}
    \end{center}
    \caption{$n$-gram entropy and cross-entropy (defined in \mbox{Eq.\eqref{eq:ce}}) across different remasking strategies for the Qwen2 architecture experiment, presented in the same format as \Fref{fig:inference_entropy}.}
    \label{fig:qweninference_entropy}
\end{figure}

\begin{figure}[p]
    \begin{center}
    \includegraphics[width=\linewidth]{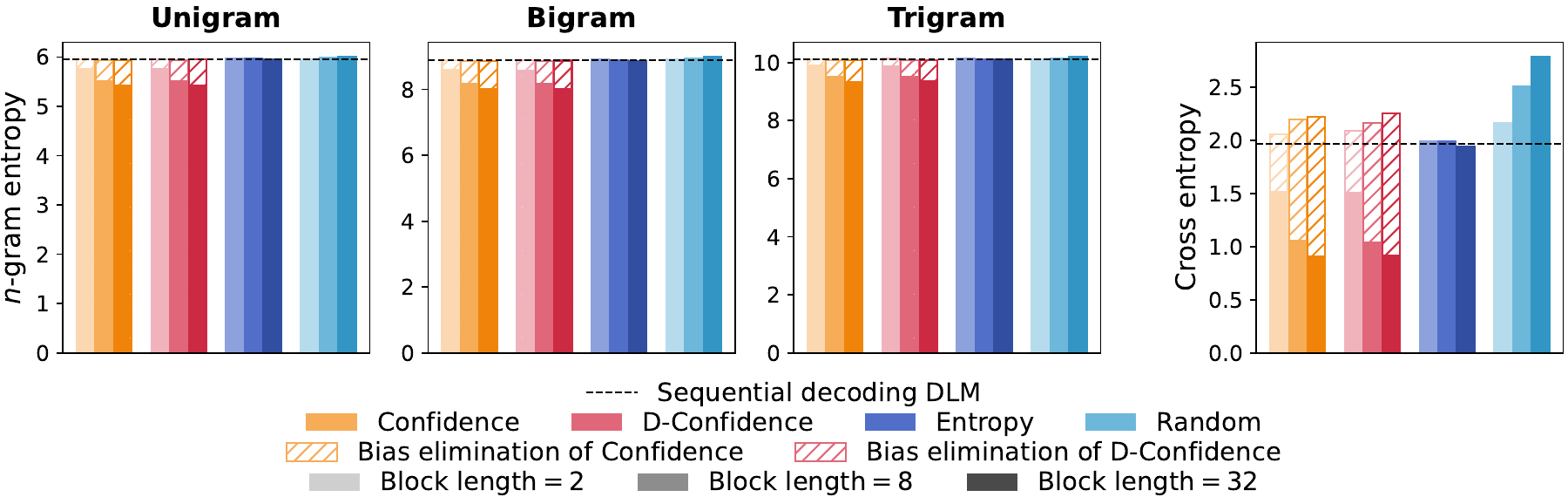}
    \end{center}
    \caption{$n$-gram entropy and cross-entropy (defined in \mbox{Eq.\eqref{eq:ce}}) across different remasking strategies for the TinyStories dataset experiment, presented in the same format as \Fref{fig:inference_entropy}.}
    \label{fig:tinystoryinference_entropy}
\end{figure}

\newpage

\begin{figure}[htb]
    \begin{subfigure}{\linewidth}
        \centering
        \includegraphics[width=\linewidth]{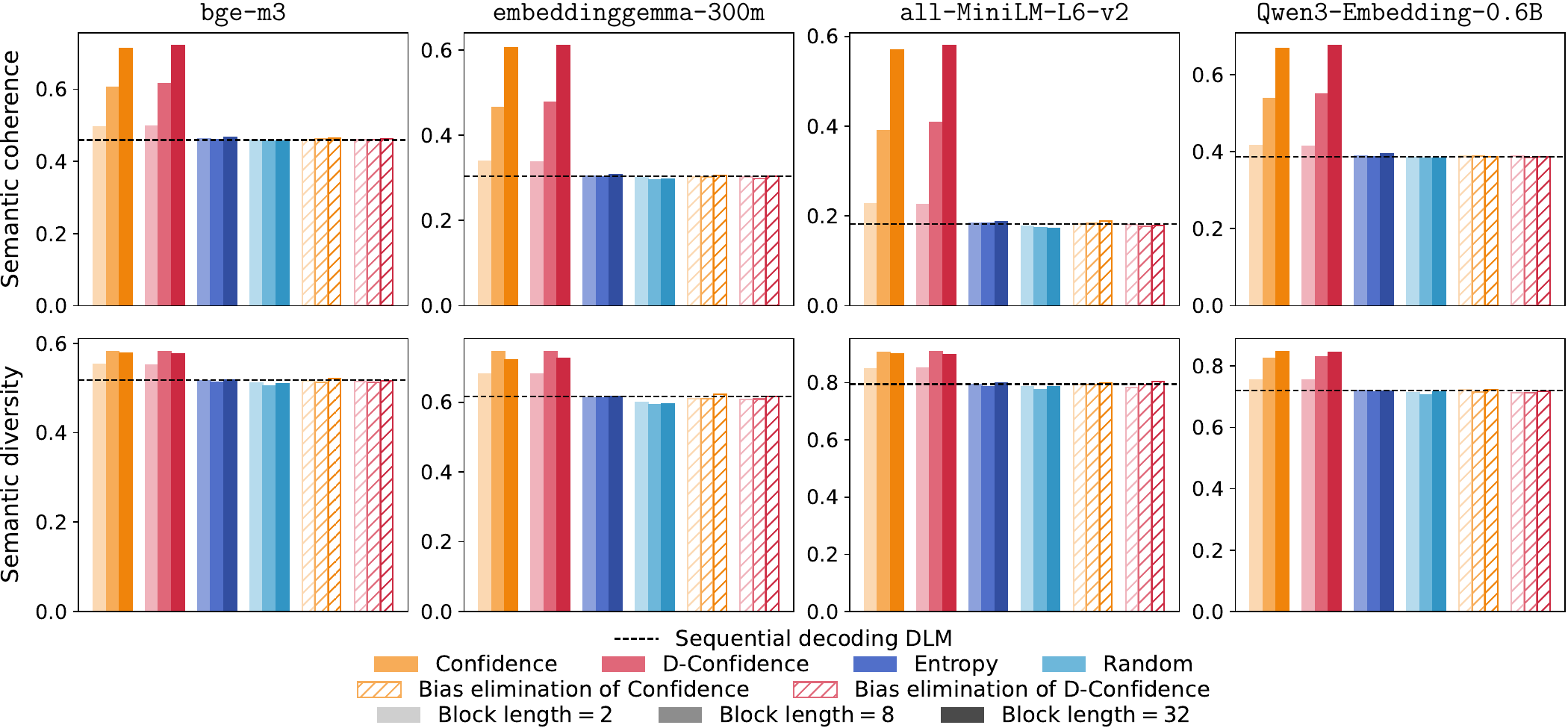}
        \caption{seed = 0}
    \end{subfigure}
    
    \vspace{1em}
    
    \begin{subfigure}{\linewidth}
        \centering
        \includegraphics[width=\linewidth]{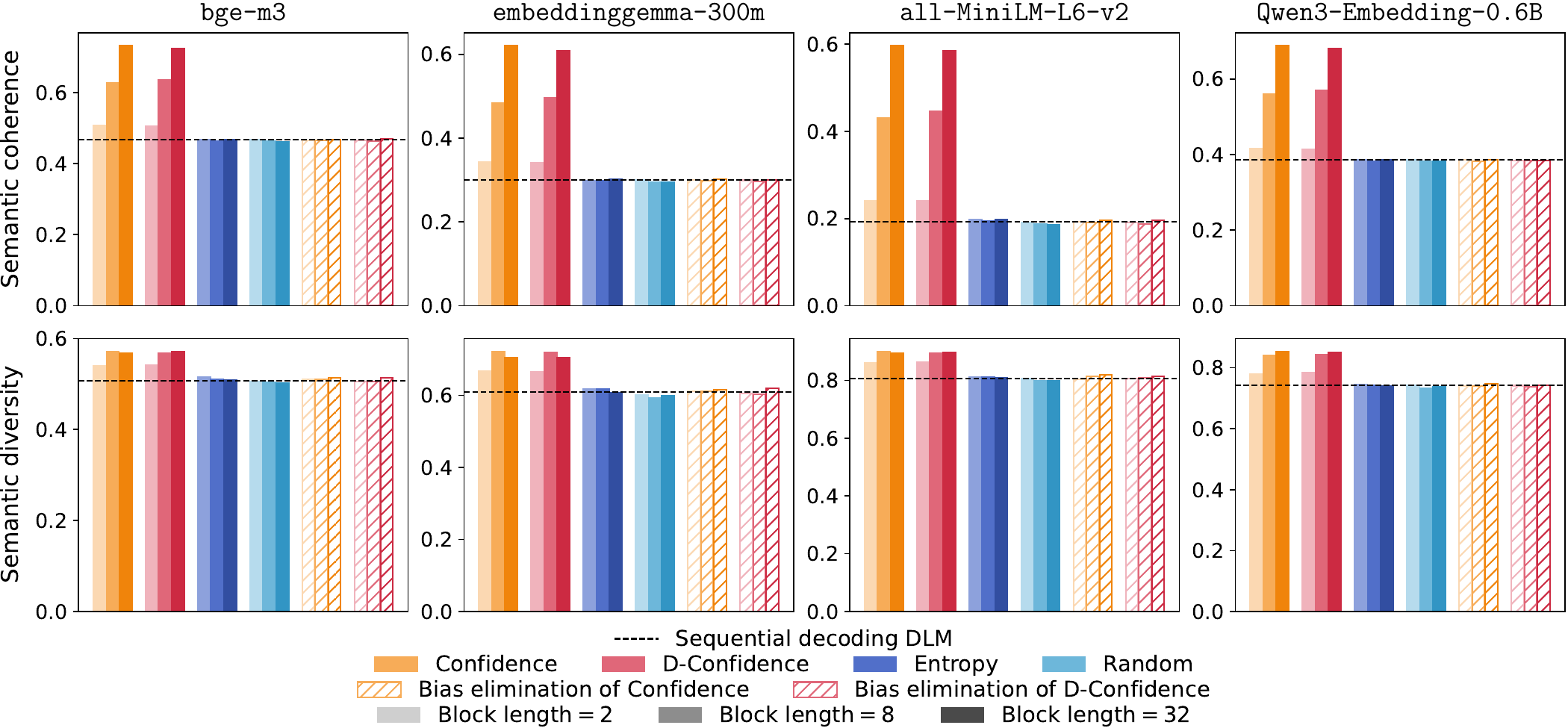}
        \caption{seed = 1}
    \end{subfigure}
    
    \vspace{1em}
    
    \begin{subfigure}{\linewidth}
        \centering
        \includegraphics[width=\linewidth]{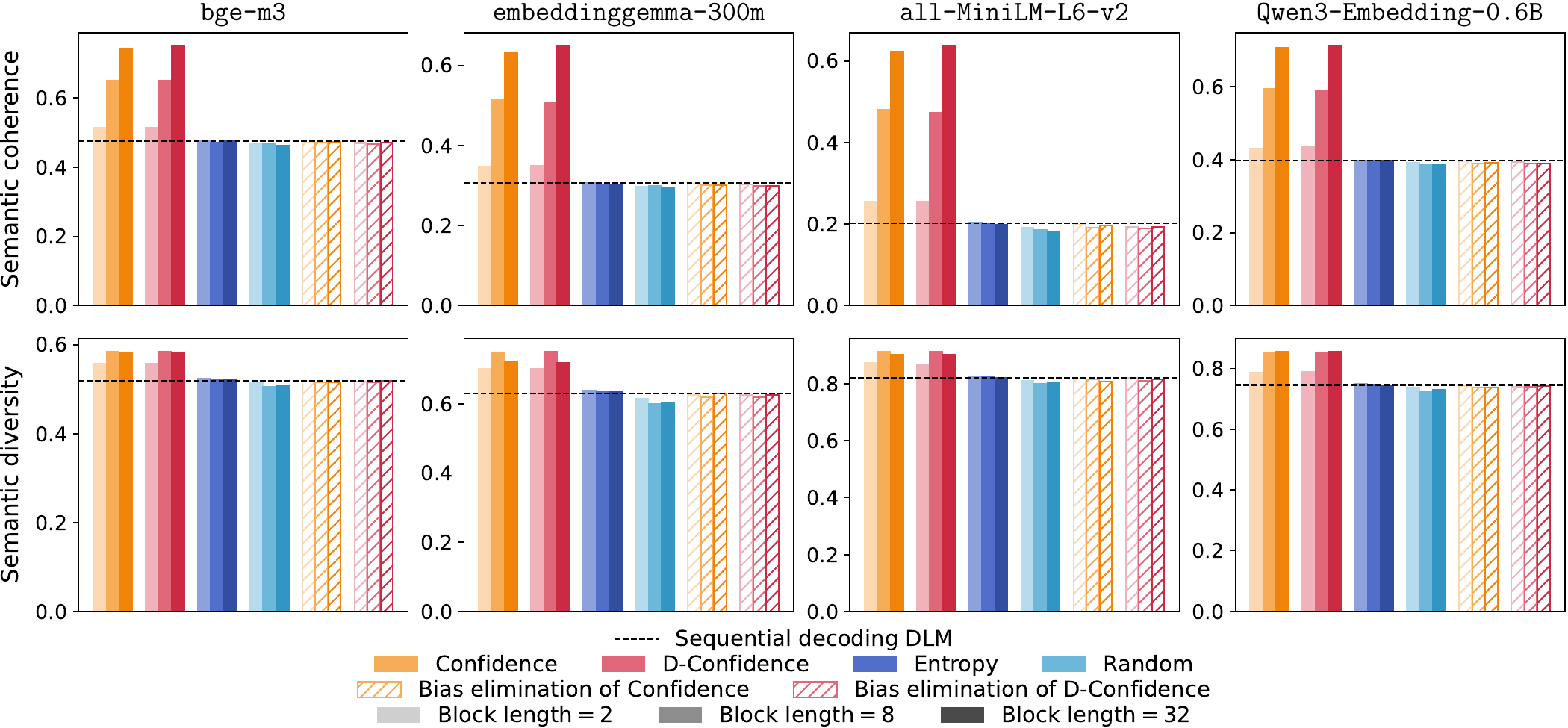}
        \caption{seed = 2}
    \end{subfigure}
    
    \caption{Semantic coherence and semantic diversity across different remasking strategies with three random seeds.}
    \label{fig:finewebinference_semantic}
\end{figure}

\begin{figure}[htb]
    \begin{center}
    \includegraphics[width=\linewidth]{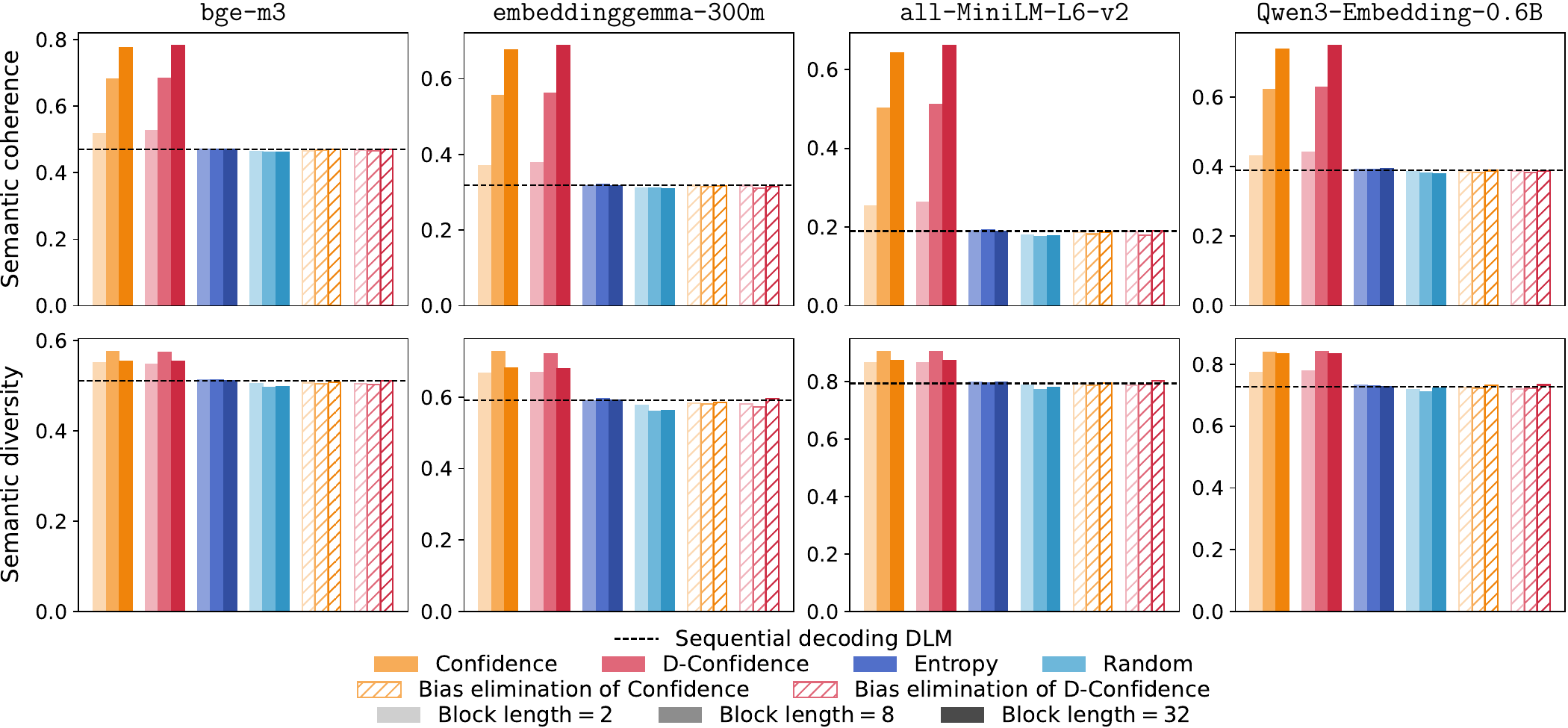}
    \end{center}
    \caption{Semantic coherence and semantic diversity across different remasking strategies for the Qwen2 architecture experiment.}
    \label{fig:qwen2inference_semantic}
\end{figure}

\begin{figure}[htb]
    \begin{center}
    \includegraphics[width=\linewidth]{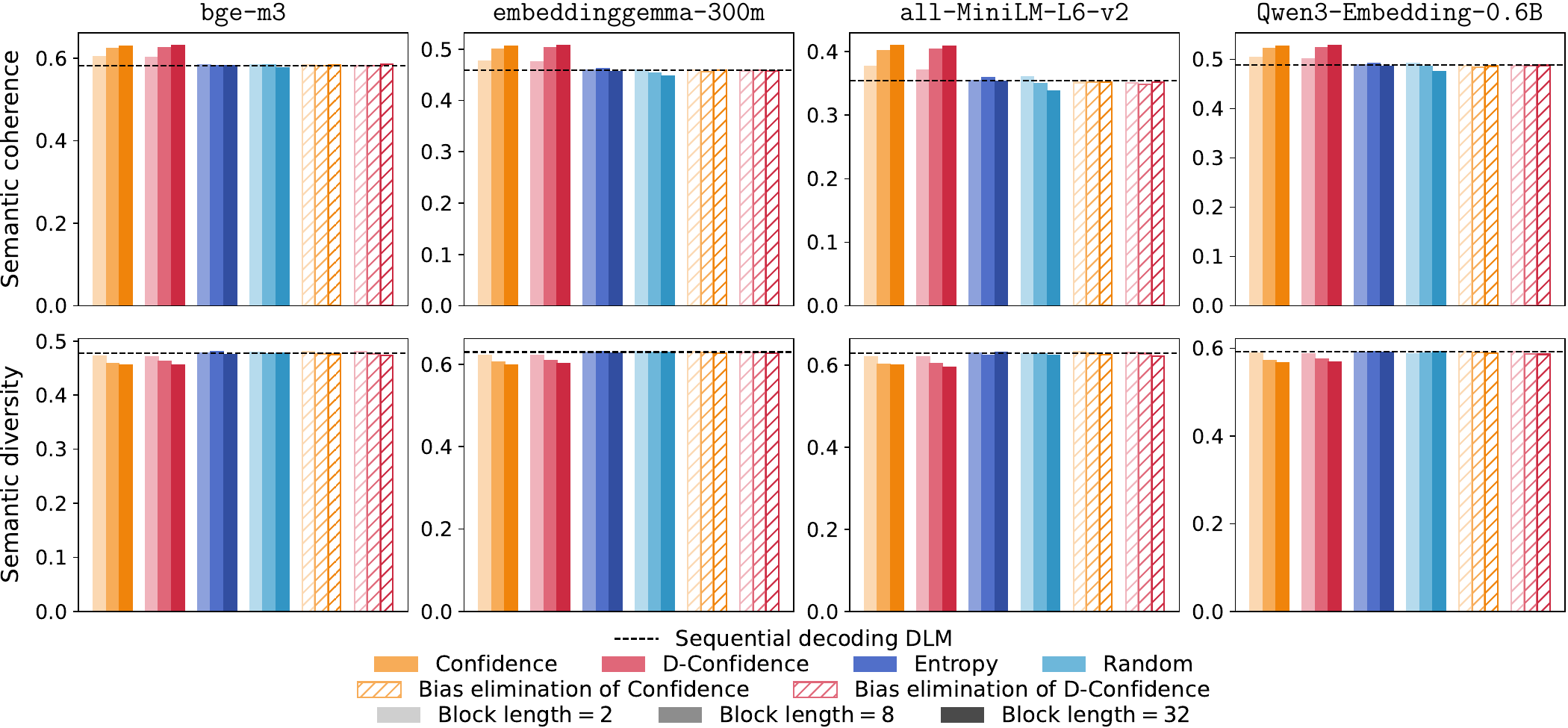}
    \end{center}
    \caption{Semantic coherence and semantic diversity across different remasking strategies for the TinyStories dataset experiment.}
    \label{fig:tinystoryinference_semantic}
\end{figure}

\end{document}